\crefname{section}{Sec.}{Secs.}
\Crefname{section}{Section}{Sections}
\Crefname{table}{Table}{Tables}
\crefname{table}{Tab.}{Tabs.}
\newcommand{\norm}[1]{\left\lVert #1 \right\rVert}
\newtheorem{theorem}{Theorem}
\newtheorem{lemma}{Lemma}
\newtheorem{corollary}{Corollary}
\newtheorem{assumption}{Assumption}
\newenvironment{customthm}[1]
  {\theorem}
  {\endtheorem}
\newenvironment{customcoll}[1]
  {\corollary}
  {\endcorollary}
\newcommand{\subalign}[1]{%
  \vcenter{%
    \Let@ \restore@math@cr \default@tag
    \baselineskip\fontdimen10 \scriptfont\tw@
    \advance\baselineskip\fontdimen12 \scriptfont\tw@
    \lineskip\thr@@\fontdimen8 \scriptfont\thr@@
    \lineskiplimit\lineskip
    \ialign{\hfil$\m@th\scriptstyle##$&$\m@th\scriptstyle{}##$\hfil\crcr
      #1\crcr
    }%
  }%
}
\DeclareMathOperator*{\argmin}{arg\,min}
\title{Non-Uniform Diffusion Models}
\author{Georgios Batzolis\\
DAMTP, \ University of Cambridge\\
{\tt\small gb511@cam.ac.uk}
% For a paper whose authors are all at the same institution,
% omit the following lines up until the closing ``}''.
% Additional authors and addresses can be added with ``\and'',
% just like the second author.
% To save space, use either the email address or home page, not both
\and
Jan Stanczuk\\
DAMTP, \ University of Cambridge\\
{\tt\small js2164@cam.ac.uk}
\and
Carola-Bibiane Schönlieb\\
DAMTP, \ University of Cambridge\\
{\tt\small cbs31@cam.ac.uk}
\and
Christian Etmann\\
Deep Render\\
{\tt\small christian.etmann@deeprender.ai}
% Institution2\\
% First line of institution2 address\\
% {\tt\small secondauthor@i2.org}
}
\date{October 2021}
\DeclareRobustCommand
\DeclareRobustCommand
\begin{document}

\maketitle

\begin{abstract}
    Diffusion models have emerged as one of the most promising frameworks for deep generative modeling. In this work, we explore the potential of non-uniform diffusion models. We show that non-uniform diffusion leads to multi-scale diffusion models which have similar structure to this of multi-scale normalizing flows. We experimentally find that in the same or less training time, the multi-scale diffusion model achieves better FID score than the standard uniform diffusion model. More importantly, it generates samples $4.4$ times faster in $128\times 128$ resolution. The speed-up is expected to be higher in higher resolutions where more scales are used. Moreover, we show that non-uniform diffusion leads to a novel estimator for the conditional score function which achieves on par performance with the state-of-the-art conditional denoising estimator. Our theoretical and experimental findings are accompanied by an open source library \texttt{MSDiff} which can facilitate further research of non-uniform diffusion models.
\end{abstract}

\begin{figure}[h]
    \begin{center}
    \begin{tabular}{ccc}
        \scriptsize Original image $x$ & \scriptsize  Observation $y$ &  \scriptsize  Sample from $p_\theta(x|y)$  \\

        \includegraphics[width=.13\textwidth]{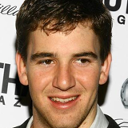} &   
        \includegraphics[width=.13\textwidth]{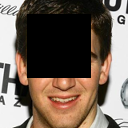} &
        \includegraphics[width=.13\textwidth]{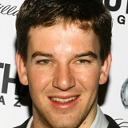}  \\

        \includegraphics[width=.13\textwidth]{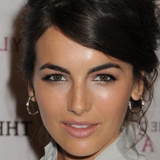} &   
        \includegraphics[width=.13\textwidth]{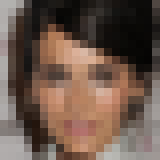} &
        \includegraphics[width=.13\textwidth]{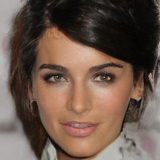}  \\

        \includegraphics[width=.13\textwidth]{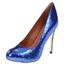} &   
        \includegraphics[width=.13\textwidth]{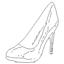} &
        \includegraphics[width=.13\textwidth]{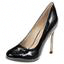}  \\
    \end{tabular}
    \end{center}
    \caption{Results from our conditional multi-speed diffusive estimator.}
    \label{fig: teaser}
\end{figure}

\section{Introduction}
The goal of generative modelling is to learn a  probability distribution from a finite set of samples. This classical problem in statistics has been studied for many decades, but until recently efficient learning of high-dimensional distributions remained impossible in practice. For images, the strong inductive biases of convolutional neural networks have recently enabled the modelling of such distributions, giving rise to the field of deep generative modelling.

Deep generative modelling became one of the central areas of deep learning with many successful applications.
In recent years much progress has been made in unconditional and conditional image generation.
The most prominent approaches are auto-regressive models \cite{bengio2005autoregressive}, variational auto-encoders (VAEs) \cite{kingma2014autoencoding},  normalizing flows \cite{papamakarios2021normalizing} and generative adversarial networks (GANs) \cite{goodfellow2014generative}.

Despite their success, each of the above methods suffers from important limitations. Auto-regressive models allow for likelihood estimation and high-fidelity image generation, but suffer from poor time complexity in high resolutions. VAEs and normalizing flows are less computationally expensive and allow for likelihood estimation, but tend to produce samples of lower visual quality. Moreover, normalizing flows put restrictions on the possible model architectures (requiring invertibility of the network and a Jacobian log-determinant that is computationally tractable), thus limiting their expressivity. While GANs produce state-of-the art quality samples, they don't allow for likelihood estimation and are notoriously hard to train due to training instabilities and mode collapse. 

Recently, score-based \cite{hyvarinen2005score_original} and diffusion-based  \cite{sohldickstein2015diffusion_original} generative models have been revived and improved in \cite{song2020generative_score} and \cite{ho2020denoising}.  
The connection between the two frameworks in discrete-time formulation has been discovered in \cite{vincent2011connection}. 
Recently in \cite{song2021sde}, both frameworks have been unified into a single continuous-time approach based on stochastic differential equations \cite{song2021sde} and are called score-based diffusion models. 
These approaches have recently received a lot of attention, achieving state-of-the-art performance in likelihood estimation \cite{song2021sde} and unconditional image generation \cite{dhariwal2021diffusion_beats_gans}, surpassing even the celebrated success of GANs.

In addition to achieving state-of-the art performance in both image generation and likelihood estimation, score-based diffusion models don't suffer from training instabilities or mode collapse \cite{dhariwal2021diffusion_beats_gans, song2021sde}. However, although their time complexity in high resolutions is better than that of auto-regressive models \cite{dhariwal2021diffusion_beats_gans}, it is still notably worse to that of GANs, normalizing flows and VAEs. Despite the recent efforts to close the sampling time gap between diffusion models and the faster frameworks, diffusion models still require significantly more time to achieve equal performance.

In this work, we explore non-uniform diffusion models. In non-uniform diffusion models, different parts of the input tensor diffuse with different diffusion speeds or more generally according to different stochastic differential equations. We find that the generalization of the original uniform diffusion framework can lead to multi-scale diffusion models which achieve improved sampling performance at a significantly faster sampling speed.

Moreover, we find that non-uniform diffusion can be used for conditional generation, because it leads to a novel estimator of the conditional score. We conduct a review and classification of existing approaches and perform a systematic comparison to find the best way of estimating the conditional score. We provide a proof of validity for the \textit{conditional denoising estimator} (which has been used in \cite{saharia2021sr3,tashiro2021csdi} without justification), and we thereby provide a firm theoretical foundation for using it in future research.

\noindent
\textbf{The contributions of this paper are as follows:}
\begin{enumerate}
    \item We introduce a principled objective for training non-uniform diffusion models.
    \item We show that non-uniform diffusion leads to the multi-scale diffusion models which are more efficient than uniform diffusion models. In less training time, the multi-scale models reach improved FID scores with significantly faster sampling speed. The speed up factor is expected to increase as we increase the number of scales.
    \item We show that non-uniform diffusion leads to \textit{conditional multi-speed diffusive estimator} (CMDE), a novel estimator of conditional score, which unifies previous methods of conditional score estimation.
    \item We provide a proof of consistency for the \textit{conditional denoising estimator} - one of the most successful approaches to estimating the conditional score. 
    
    \item We review and empirically compare score-based diffusion  approaches to modelling conditional distributions of image data. The models are evaluated on the tasks of super-resolution, inpainting and edge to image translation.

    \item We provide an open-source library \texttt{MSDiff}, to facilitate further research on conditional and non-uniform diffusion models. \footnote{The code will be released in the near future.}
\end{enumerate}

\section{Notation}

In this work we will use the following notation:
\begin{itemize}
    \item \textbf{Functions of time}
    \begin{gather*}
        f_t := f(t)
    \end{gather*}
    \item \textbf{Indexing vectors} \\
    Let $v = (v_1, ..., v_n) \in \mathbb{R}^n$ and let $ 1 \leq i < j < n$. Then:
    \begin{align*}
        %v[i,j] &:= (v_i, v_{i+1}, ..., v_j) \in \mathbb{R}^{j - i + 1} \\ \nonumber
        v[:j] &:= (v_1, v_{2}, ..., v_j) \in \mathbb{R}^{j},
    \end{align*}
    cf. Section \ref{sec:CDiffE}.
    \item \textbf{Probability distributions} \\
    We denote the probability distribution of a random variable solely via the name of its density's argument, e.g.
    \begin{gather*}
        p(x_t) := p_{X_t}(x_t),
    \end{gather*}   
    where $x_t$ is a realisation of the random variable $X_t$.
    \item \textbf{Iterated Expectations}
    %\begin{gather*}
    %    \mathbb{E}_{\subalign{&x \sim p(x) \\ &y \sim p(y)}}[f(x,y)] := \mathbb{E}_{x \sim p(x)}\mathbb{E}_{y \sim p(y)}[f(x,y)]
    %\end{gather*}
    \begin{align*}
        &\mathbb{E}_{\subalign{z_1 &\sim p(z_1) \\ &{\myvdots} \\z_n &\sim p(z_n)}}[f(z_1,\Compactcdots,z_n)] \\
        :=&\mathbb{E}_{z_1 \sim p(z_1)} \Compactcdots \mathbb{E}_{z_n \sim p(z_n)}[f(z_1,\Compactcdots,z_n)]
    \end{align*}

\end{itemize} 

\section{Methods}
In the following, we will provide details about the framework and estimators discussed in this paper.
\subsection{Background: Score matching through Stochastic Differential Equations}
\subsubsection{Score-Based Diffusion}

In a recent work \cite{song2021sde} score-based  \cite{hyvarinen2005score_original, song2020generative_score} and diffusion-based \cite{sohldickstein2015diffusion_original, ho2020denoising} generative models have been unified into a single continuous-time score-based framework where the diffusion is driven by a stochastic differential equation.  This framework relies on Anderson's Theorem \cite{anderson1982reverse_time_sde}, which states that under certain Lipschitz conditions on $f : \mathbb{R}^{n_x} \times \mathbb{R} \xrightarrow{} \mathbb{R}^{n_x}$ and $G : \mathbb{R}^{n_x} \times \mathbb{R}\xrightarrow{} \mathbb{R}^{n_x} \times \mathbb{R}^{n_x}$ and an integrability condition on the target distribution $p(\textbf{x}_0)$ a forward diffusion process governed by the following SDE:
\begin{gather}
\label{eq:forward_sde}
 d\textbf{x}_t = f(\textbf{x}_t,t)dt+G(\textbf{x}_t,t)d\textbf{w}_t  
\end{gather} 
has a reverse diffusion process governed by the following SDE:
\begin{multline}\label{eq:reverse_sde}
d\textbf{x}_t=[f(\textbf{x}_t,t)-G(\textbf{x}_t,t)G(\textbf{x}_t,t)^T\nabla_{\textbf{x}_t}{\ln{p_{\textbf{X}_t}(\textbf{x}_t)}}]dt \\+ G(\textbf{x}_t,t)d\Bar{\textbf{w}_t},
\end{multline}

\noindent where $\Bar{\textbf{w}_t}$ is a standard Wiener process in reverse time. 

The forward diffusion process transforms the \textit{target distribution} $p(\textbf{x}_0)$ to a \textit{diffused distribution} $p(\textbf{x}_T)$ after diffusion time $T$. By appropriately selecting the drift and the diffusion coefficients of the forward SDE, we can make sure that after sufficiently long time $T$, the diffused distribution $p(\textbf{x}_T)$ approximates a simple distribution, such as $\mathcal{N}(\textbf{0},\textbf{I})$. We refer to this simple distribution as the \textit{prior distribution}, denoted by $\pi$. The reverse diffusion process transforms the diffused distribution $p(\textbf{x}_T)$ to the data distribution $p(\textbf{x}_0)$ and the prior distribution $\pi$ to a distribution $p^{SDE}$. $p^{SDE}$ is close to $p(\textbf{x}_0)$ if the diffused distribution $p(\textbf{x}_T)$ is close to the prior distribution $\pi$. We get samples from $p^{SDE}$ by sampling from $\pi$ and simulating the reverse sde from time $T$ to time $0$.

To get samples by simulating the reverse SDE, we need access to the time-dependent score function $\nabla_{\textbf{x}_t}{\ln{p(\textbf{x}_t)}}$ for all $\textbf{x}_t$ and $t$. In practice, we approximate the time-dependent score function with a neural network $s_{\theta}(\textbf{x}_t,t) \approx \nabla_{\textbf{x}_t}{\ln{p(\textbf{x}_t)}}$ and simulate the reverse SDE in equation \ref{eq:approximated_reverse_sde} to map the prior distribution $\pi$ to $p^{SDE}_{\theta}$.

\begin{multline}\label{eq:approximated_reverse_sde}
d\textbf{x}_t=[f(\textbf{x}_t,t)-G(\textbf{x}_t,t)G(\textbf{x}_t,t)^Ts_{\theta}(\textbf{x}_t,t)]dt \\+ G(\textbf{x}_t,t)d\Bar{\textbf{w}_t},
\end{multline}If the prior distribution is close to the diffused distribution and the approximated score function is close to the ground truth score function, the modeled distribution  $p^{SDE}_{\theta}$ is provably close to the target distribution $p(\textbf{x}_0)$. This statement is formalised in the language of distributional distances in the next subsection.

\subsubsection{Uniform Diffusion Models}
Previous works \cite{ho2020denoising, song2020generative_score, dhariwal2021diffusion_beats_gans} used the same forward SDE for the diffusion of all the pixels. For this reason, we classify them as uniform diffusion models. In uniform diffusion models, the sde in equation \ref{eq:uniform_sde} describes the forward diffusion for all pixels in an image:

\begin{equation}\label{eq:uniform_sde}
dx_t=f(x_t,t)dt + g(t)d\Bar{w_t},
\end{equation}

We used unbold notation for the random variables to show that this equation describes diffusion in one dimension. For uniform diffusion models, the neural network $s_\theta(\textbf{x}_t,t)$ can be trained to approximate the score function $\nabla_{\textbf{x}_t}{\ln{p(\textbf{x}_t)}}$ by minimizing the weighted score matching objective
\begin{gather}
    \mathcal{L}_{SM}(\theta, \lambda(\cdot)) := \frac{1}{2} \mathbb{E}_{\subalign{&t \sim U(0,T)\\ &\textbf{x}_t \sim p(\textbf{x}_t)}} [\lambda(t) \norm{\nabla_{\textbf{x}_t}{\ln{p(\textbf{x}_t)}} - s_\theta(\textbf{x}_t,t)}_2^2]
\end{gather}
where $\lambda: [0,T] \xrightarrow{} \mathbb{R}_+$ is a positive weighting function.

However, the above quantity cannot be optimized directly since we don't have access to the ground truth score $\nabla_{\textbf{x}_t}{\ln{p(\textbf{x}_t)}}$. Therefore in practice, a different objective has to be used \cite{hyvarinen2005score_original, song2020generative_score, song2021sde}. In \cite{song2021sde}, the weighted denoising score-matching objective is used, which is defined as 

\begin{gather}\label{DSM for uniform diffusion models}
\begin{aligned}
    &\mathcal{L}_{DSM}(\theta, \lambda(\cdot)) := \\ 
    &\frac{1}{2} \mathbb{E}_{\subalign{&t \sim U(0,T)\\ &\textbf{x}_0 \sim p(\textbf{x}_0) \\ &\textbf{x}_t \sim p(\textbf{x}_t | x_0)}} [\lambda(t) \norm{\nabla_{\textbf{x}_t}{\ln{p(\textbf{x}_t | \textbf{x}_0)}} - s_\theta(\textbf{x}_t,t)}_2^2]
\end{aligned}
\end{gather}

The difference between DSM and SM is the replacement of the ground truth score which we do not know by the score of the perturbation kernel which we know analytically for many choices of forward SDEs. The choice of the weighted DSM objective is justified because the weighted DSM objective is equal to the SM objective up to a constant that does not depend on the parameters of the model $\theta$. The reader can refer to \cite{vincent2011connection} for the proof. 

The choice of the weighting function is also important, because it determines the quality of score-matching in different diffusion scales. A principled choice for the weighting function is $\lambda(t) = g(t)^2$, where $g(\cdot)$ is the diffusion coefficient of the forward SDE. This weighting function is called the likelihood weighting function \cite{song2021maximum}, because it ensures that we minimize an upper bound on the Kullback–Leibler divergence from the target distribution to the model distribution by minimizing the weighted DSM objective with this weighting. The previous statement is implied by the combination of inequality \ref{Likelihood Weighting for Uniform Diffusion Models} which is proven in \cite{song2021maximum} and the relationship between the DSM and SM objectives.

\begin{equation}\label{Likelihood Weighting for Uniform Diffusion Models}
D_{KL}(p(\textbf{x}_0)\parallel p^{SDE}_{\theta})\leq L_{SM}(\theta, g(\cdot)^2)+D_{KL}(p(\textbf{x}_T)\parallel \pi)
\end{equation}

Other weighting functions have also yielded very good results with particular choices of forward sdes. However, we do not have theoretical guarantees that alternative weightings would yield good results with arbitrary choices of forward sdes.

\begin{comment}
The above expression involves only the score function of the perturbation kernel, i.e., $\nabla_{\textbf{x}_t}{\ln{p(\textbf{x}_t | \textbf{x}_0)}}$ which can be computed analytically from the perturbation kernel of the forward diffusion process (provided that $f$ and $g$ are sufficiently simple). The expectation can be approximated using Monte Carlo estimation in the following way: First, we sample time points $t_i$ from the uniform distribution on $[0,T]$, then we sample points $\tilde{x}_i$ from the target distribution $p_0$ (available via training set). Next, for each $\tilde{x}$, we sample points $x_i$ from the transition kernel $p(x_t | \tilde{x})$. Finally, we average the expression inside the expectation over all samples obtained in this way.
\end{comment}

\subsection{Non-Uniform Diffusion Models}\label{Non-Uniform Diffusion Models}
In this section, we describe non-uniform diffusion models. We call them non-uniform to indicate that the forward diffusion of each pixel is potentially governed by a different SDE. Considering a vectorised form $x=\text{vec}(X)=[x^1, x^2,...,x^{mnc}]$ of an image $X\in [0,1]^m\times[0,1]^n\times[0,1]^c$, we assume that the diffusion of the $i^{th}$ pixel is governed by the following SDE:

\begin{equation}\label{non-uniform pixel diffusion}
    dx^{i}_t = f_i(x^i_t,t)dt+g_i(t)dw^i_t
\end{equation}

Equation \ref{non-uniform pixel diffusion} is a special case of the general It\^{o} SDE described in equation \ref{eq:forward_sde}, but provides more flexibility compared to uniform diffusion where all pixels diffuse independently according to the same SDE. The diffusion of the entire image vector is summarised by the following SDE:

\begin{equation}\label{non-uniform image diffusion}
    d\textbf{x}_t = f(\textbf{x}_t,t)dt+G(t)d\textbf{w}_t,
\end{equation}

\noindent where \(f(\textbf{x}_t)=[f_1(x^1,t),...,f_{mnc}(x^{mnc}_t,t)]\) and \(G(t)=\text{diag}([g_1(t),...,g_{mnc}(t)])\).

In this more general setup, the DSM objective as described in equation \ref{DSM for uniform diffusion models} must also be generalised. The positive weighting function $\lambda(\cdot)$ is replaced by a positive definite matrix $\Lambda(\cdot)$ which gives the form of the DSM objective for non-uniform diffusion models:

\begin{gather}\label{DSM for non-uniform diffusion models}
\begin{aligned}
    &\mathcal{L}_{DSM}(\theta, \Lambda(\cdot)) := \\ 
    &\frac{1}{2} \mathbb{E}_{\subalign{&t \sim U(0,T)\\ &\textbf{x}_0 \sim p(\textbf{x}_0) \\ &\textbf{x}_t \sim p(\textbf{x}_t | x_0)}} [\textbf{v}_{\theta}(\textbf{x}_0,\textbf{x}_t,t)^T\Lambda(t)\textbf{v}_{\theta}(\textbf{x}_0,\textbf{x}_t,t)],
\end{aligned}
\end{gather}

\noindent where \(\textbf{v}_{\theta}(\textbf{x}_0,\textbf{x}_t,t)=\nabla_{\textbf{x}_t}{\ln{p(\textbf{x}_t | \textbf{x}_0)}} - s_\theta(\textbf{x}_t,t)\)

We prove that a principled choice for the positive weighting matrix is $\Lambda_{MLE}(t)=G(t)G(t)^T$. We call it the likelihood weighting matrix for non-uniform diffusion because it ensures minimization of an upper bound to the KL divergence from the target distribution to the model distribution. The previous statement is summarised in Theorem \ref{Likelihood Weighting for Non-uniform diffusion} which is proved in section \ref{appendix:weighting} of the Appendix.

\begin{theorem}\label{Likelihood Weighting for Non-uniform diffusion}
    Let $p(\textbf{x}_t)$ denote the distribution implied by the forward SDE at time $t$ and $p^{SDE}_\theta(\textbf{x}_t)$ denote the distribution implied by the parametrized reverse SDE at time $t$. Then under regularity assumptions of  \cite[Theorem 1]{song2021maximum} we have that 
    \label{thm:multi-dim}
    \begin{align*}
        KL(p(\textbf{x}_0) | p^{SDE}_\theta(\textbf{x}_0)) \leq &KL(p(\textbf{x}_T) | \pi(\textbf{x}_T)) 
        \\ &+ \frac{1}{2} \mathbb{E}_{\subalign{&t \sim U(0,T)\\ &\textbf{x}_t \sim p(\textbf{x}_t)}} 
        [
            \textbf{v}^T G(t)G(t)^T \textbf{v}
        ],
    \end{align*}
where $\textbf{v}=\nabla_{\textbf{x}_t} \ln{p(\textbf{x}_t)} - s_\theta(\textbf{x}_t,t)$. 
\end{theorem}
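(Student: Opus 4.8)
The plan is to generalize the proof of \cite[Theorem 1]{song2021maximum} from a scalar diffusion coefficient $g(t)$ to the diagonal matrix coefficient $G(t)=\text{diag}([g_1(t),\dots,g_{mnc}(t)])$. The two essential tools are the data processing inequality and Girsanov's theorem, and the key structural fact is that the true reverse SDE in equation~\ref{eq:reverse_sde} and the parametrized reverse SDE in equation~\ref{eq:approximated_reverse_sde} share the same diffusion coefficient $G(t)$ and differ only in their drift, through the replacement of $\nabla_{\textbf{x}_t}\ln p(\textbf{x}_t)$ by $s_\theta(\textbf{x}_t,t)$.

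First I would introduce two path measures on $C([0,T];\mathbb{R}^{n_x})$: let $\mu$ be the law of the forward process governed by equation~\ref{non-uniform image diffusion}, which by Anderson's theorem \cite{anderson1982reverse_time_sde} coincides with the law of the true reverse SDE initialized at $p(\textbf{x}_T)$; and let $\nu$ be the law of the parametrized reverse process of equation~\ref{eq:approximated_reverse_sde} initialized at the prior $\pi$. Their time-$0$ marginals are exactly $p(\textbf{x}_0)$ and $p^{SDE}_\theta(\textbf{x}_0)$, so the data processing inequality yields
\[
    KL(p(\textbf{x}_0) \,|\, p^{SDE}_\theta(\textbf{x}_0)) \leq KL(\mu \,|\, \nu).
\]

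Next I would expand the path-space divergence by the chain rule for relative entropy, disintegrating with respect to the state at time $T$:
\[
    KL(\mu \,|\, \nu) = KL(p(\textbf{x}_T) \,|\, \pi) + \mathbb{E}_{\textbf{x}_T \sim p(\textbf{x}_T)}\!\left[ KL\big(\mu(\cdot \,|\, \textbf{x}_T) \,|\, \nu(\cdot \,|\, \textbf{x}_T)\big)\right].
\]
The first term is the prior-matching term of the statement. In the second term, both conditional laws solve reverse-time SDEs with identical diffusion $G(t)$ and identical endpoint at time $T$, so they differ only through the drift increment $G(t)G(t)^T\big(s_\theta(\textbf{x}_t,t)-\nabla_{\textbf{x}_t}\ln p(\textbf{x}_t)\big)=-G(t)G(t)^T\textbf{v}$. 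Girsanov's theorem then evaluates the conditional relative entropy as
\[
    \frac{1}{2}\,\mathbb{E}\!\int_0^T \norm{G(t)^{-1}G(t)G(t)^T\textbf{v}}_2^2\,dt = \frac{1}{2}\,\mathbb{E}\!\int_0^T \textbf{v}^T G(t)G(t)^T\textbf{v}\,dt,
\]
where $G(t)^{-1}$ is well defined because $G(t)$ is diagonal with strictly positive entries $g_i(t)$. Since the marginal of $\mu$ at each time $t$ is $p(\textbf{x}_t)$, identifying the time integral with the expectation over $t\sim U(0,T)$ appearing in the statement completes the bound, in exact parallel with the uniform case, inequality~\ref{Likelihood Weighting for Uniform Diffusion Models}.

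I expect the main obstacle to be the rigorous justification of Girsanov's theorem in this multidimensional setting: one must verify a Novikov-type integrability condition guaranteeing that the Radon--Nikodym density between $\mu$ and $\nu$ is a true martingale, and confirm that the drift difference lies in the range of $G(t)$ so that $G(t)^{-1}(\cdot)$ is well posed. These are precisely the regularity hypotheses imported from \cite[Theorem 1]{song2021maximum}, and since $G(t)$ is here diagonal and invertible, the matrix manipulations decouple coordinate-wise; the only genuinely new step beyond the scalar argument is the algebraic identity $\norm{G^{-1}GG^T\textbf{v}}_2^2=\textbf{v}^T GG^T\textbf{v}$, which replaces the scalar weighting $g(t)^2$ by the weighting matrix $G(t)G(t)^T$.
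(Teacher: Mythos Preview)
Your proposal is correct and follows essentially the same route as the paper's proof: both pass to path-space KL, disintegrate at time $T$ via the chain rule for relative entropy, and apply Girsanov's theorem to evaluate the conditional KL, with the algebraic identity $\norm{G^{-1}GG^T\textbf{v}}_2^2=\textbf{v}^T GG^T\textbf{v}$ producing the weighting matrix. The only cosmetic difference is that the paper phrases the first inequality as the chain rule at time $0$ plus nonnegativity of the remainder (equivalently, your data processing inequality), and it states the result for a general positive-definite $\Sigma(t)$ before specializing, whereas you work directly with the diagonal $G(t)$.
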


\subsection{Application of Non-Uniform Diffusion in multi-scale diffusion} \label{sec:multi-scale diffusion}

We design the forward process so that different groups of pixels diffuse with different speeds which creates a multi-scale diffusion structure. The intuition stems from multi-scale normalising flows. Multi-scale normalising flows invertibly transform the input tensor to latent encodings of different scales by splitting the input tensor into two parts after transformation in each scale. The multi-scale structure in normalising flows is shown to lead to faster training and sampling without compromise in generated image quality. %This is because we do not need to transform the entire tensor in all the invertible layers.

We intend to transfer this idea to score-based modeling by diffusing some parts of the tensor faster. There are many ways to split the image into different parts which diffuse faster. It has been experimentally discovered that cascaded diffusion models \cite{saharia2021sr3} yield improved results  compared to standard diffusion models. This gave us the intuition to use a multi-level haar transform to transform every image to $n$ high frequency scales $d_1,...,d_n$ (detail coefficients) and one low frequency scale $a_n$ (approximation coefficient). The natural generation order of the haar coefficients (in line with cascaded diffusion) is $a_n$, $d_n$, $d_{n-1}$, ..., $d_1$. For this reason, we choose to diffuse lower frequency coefficients slower than high frequency coefficients. More specifically, we design the forward process so that all coefficients reach the same signal-to-noise ratio at the end of their diffusion time. We set the diffusion time for $a_n$ to $T_{a_n}=1$ and for $d_i$ to $T_{d_i}=\frac{i}{n+1}$ for each $i\in [1,...,n]$.

\subsubsection{Training}
We approximate the score of the distribution of $c_i(t) = [a_n(t), d_n(t),...,d_i(t)]$ in the time range $[(i-1)/(n+1), i/(n+1)]$ with a separate neural network $s_i(c_i(t), t)$. We also use a separate network $s_{n+1}(c_{n+1}(t), t)$ to approximate the score of the distribution of $c_{n+1}(t)=a_n(t)$ in the diffusion time range $[n/(n+1), 1]$. We use different networks in each scale to leverage the fact that we approximate the score of lower dimensional distributions. This enables faster score function evaluation and, therefore, faster training and sampling. We train each network separately using the likelihood weighting matrix for non-uniform diffusion (see section \ref{Non-Uniform Diffusion Models}).

\subsubsection{Sampling}
The sampling process is summarised in the following steps:

\begin{enumerate}
    \item  Sample $a_n(1)$ from the stationary distribution (e.g. standard normal distribution) and integrate the reverse sde for $a_n$ from time $t=1$ to time $t=n/(n+1)$. 
    \item Sample $d_n(n/(n+1))$ from the stationary distribution and solve the reverse sde for $[a_n, d_n]$ from time $t=n/(n+1)$ to time $t=(n-1)/(n+1)$.
    \item The process is continued as implied until we reach the final generation level, where we sample $d_1(1/(n+1))$ from the stationary distribution and solve the reverse sde for $[a_n,d_n,...,d_1]$ from time $t=1/(n+1)$ to time $t=\epsilon$ (e.g., $\epsilon=10^{-5}$). 
    \item We convert the generated haar coefficients $[a_n(\epsilon),d_n(\epsilon),...,d_1(\epsilon)]$ to the generated image using the multi-level inverse haar transform.
\end{enumerate}

Our experimental results presented in section \ref{Multiscale diffusion experiments} show that multiscale diffusion is more efficient and effective than uniform diffusion.

%We present the results on ImageNet and CelebA which show that for the same number of parameters and the same training time, the multi-scale model achieves better performance with faster sampling speed.

\subsection{Application of Non-Uniform Diffusion in Conditional generation} \label{sec:conditional generation}

The continuous score-matching framework can be extended to conditional generation, as shown in  \cite{song2021sde}. Suppose we are interested in $p(x|y)$, where $x$ is a \textit{target image} and $y$ is a \textit{condition image}. Again, we use the forward diffusion process (Equation \ref{eq:forward_sde}) to obtain a family of diffused distributions $p(x_t | y)$ and apply Anderson's Theorem to derive the \textit{conditional reverse-time SDE}
\begin{equation}
    \label{eq:conditional_reverse_sde}
    dx = [\mu(x,t) - \sigma(t)^2 \nabla_{x} \ln p_{X_t}(x | y)]dt + \sigma(t)d\tilde{w}.
\end{equation}
Now we need to learn the score $\nabla_{x_t} \ln p(x_t|y)$ in order to be able to sample from $p(x | y)$ using reverse-time diffusion.\\

In this work, we discuss the following approaches to estimating the conditional score $\nabla_{x_t} \ln p(x_t|y)$:
\begin{enumerate}
    \item Conditional denoising estimators
    \item Conditional diffusive estimators
    \item Multi-speed conditional diffusive estimators (our method)
\end{enumerate}
We discuss each of them in a separate section.\\

In \cite{song2021sde} an additional approach to conditional score estimation was suggested:
This method proposes learning $\nabla_{x_t} \ln p(x_t)$ with an unconditional score model, and  learning $p(y | x_t)$ with an auxiliary model. Then, one can use 
    \begin{equation*}
        \nabla_{x_t}\ln p(x_t |y) = \nabla_{x_t} \ln p(x_t) + \nabla_{x_t} \ln p(y | x_t)
    \end{equation*}
to obtain $\nabla_{x_t} \ln p(x_t | y)$. Unlike other approaches, this requires training a separate model for $p(y|x_t)$. Appropriate choices of such models for tasks discussed in this paper have not been explored yet. Therefore we exclude this approach from our study. 

\subsubsection{Conditional denoising estimator (CDE)}
The conditional denoising estimator (CDE) is a way of estimating $p(x_t|y)$ using the denoising score matching approach \cite{vincent2011connection, song2020generative_score}. In order to approximate $p(x_t|y)$, the conditional denoising estimator minimizes
\begin{gather}
\begin{aligned}
        \label{CDN}
        &\frac{1}{2} \mathbb{E}_{\subalign{&t \sim U(0,T)\\ &x_0, y \sim p(x_0, y) \\ &x_t \sim p(x_t | x_0)}} 
        [\lambda(t) \norm{\nabla_{x_t} \ln{p(x_t | x_0)} - s_\theta(x_t, y, t)}_2^2]
\end{aligned}
\end{gather}
This estimator has been shown to be successful in previous works \cite{saharia2021sr3,tashiro2021csdi}, also confirmed in our experimental findings (cf. Section \ref{sec:experiments}). 

Despite the practical success, this estimator has previously been used without a theoretical justification of why training the above objective yields the desired conditional distribution. Since $p(x_t|y)$ does not appear in the training objective, it is not obvious that the minimizer approximates the correct quantity. 

By extending the arguments of \cite{vincent2011connection}, we provide a formal proof that the minimizer of the above loss does indeed approximate the correct conditional score $p(x_t|y)$. This is expressed in the following theorem.

\begin{theorem}
    \label{thm:CDE_consistency}
    The minimizer (in $\theta$) of
    \begin{gather*}
    \begin{aligned}
            \frac{1}{2} \mathbb{E}_{\subalign{&t \sim U(0,T)\\ &x_0, y \sim p(x_0, y) \\ &x_t \sim p(x_t | x_0)}} 
            [\lambda(t) \norm{\nabla_{x_t} \ln{p(x_t | x_0)} - s_\theta(x_t, y, t)}_2^2]
    \end{aligned}
    \end{gather*}    
    is the same as the minimizer of 
    \begin{gather*}
        \frac{1}{2} \mathbb{E}_{\subalign{&t \sim U(0,T)\\ &x_t, y \sim p(x_t, y)}} 
        [\lambda(t) \norm{\nabla_{x_t} \ln{p(x_t | y)} - s_\theta(x_t, y,t)}_2^2]
    \end{gather*}
\end{theorem}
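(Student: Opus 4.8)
The plan is to adapt the denoising score-matching identity of \cite{vincent2011connection} to the conditional setting. After expanding the squared norm, each of the two objectives splits into three pieces: a quadratic term in $s_\theta$, a cross term that is linear in $s_\theta$, and a term that does not involve $\theta$ at all. The strategy is to show that the quadratic terms agree and that the cross terms agree; since the remaining piece is $\theta$-independent in both cases, the two objectives differ only by an additive constant and therefore share the same minimizer. The linchpin of the whole argument is the observation that the forward diffusion of Equation \ref{eq:forward_sde} acts only on the target $x$ and never on the condition $y$, so the perturbation kernel is conditionally independent of $y$, i.e.
\begin{equation*}
    p(x_t \mid x_0, y) = p(x_t \mid x_0).
\end{equation*}
I would record this first and invoke it repeatedly.

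For the quadratic term, note that in the CDE objective the variables are drawn from $p(x_0, y)\,p(x_t \mid x_0)$. Writing $p(x_0, y) = p(y)\,p(x_0 \mid y)$ and marginalizing out $x_0$ gives
\begin{equation*}
    \int p(y)\,p(x_0 \mid y)\,p(x_t \mid x_0)\,dx_0 = p(y)\,p(x_t \mid y) = p(x_t, y),
\end{equation*}
where the first equality again uses $p(x_t \mid x_0) = p(x_t \mid x_0, y)$. Since $s_\theta$ depends only on $(x_t, y, t)$, this shows that the $\norm{s_\theta}_2^2$ contributions of the two objectives are literally the same expectation.

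For the cross term I would use the elementary identity $p(x_t \mid x_0)\,\nabla_{x_t}\ln p(x_t \mid x_0) = \nabla_{x_t} p(x_t \mid x_0)$ to rewrite the CDE cross term, then interchange the $x_0$-integral with the gradient and marginalize:
\begin{equation*}
    \int p(x_0 \mid y)\,\nabla_{x_t} p(x_t \mid x_0)\,dx_0 = \nabla_{x_t} p(x_t \mid y) = p(x_t \mid y)\,\nabla_{x_t}\ln p(x_t \mid y).
\end{equation*}
Substituting back and regrouping the densities into $p(x_t,y)$, the CDE cross term collapses to $\mathbb{E}_{t,(x_t,y)}[\lambda(t)\langle \nabla_{x_t}\ln p(x_t\mid y),\, s_\theta\rangle]$, which is exactly the cross term produced by the explicit conditional objective.

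Combining the three pieces, the two objectives differ only by the $\theta$-free terms involving $\norm{\nabla_{x_t}\ln p(x_t\mid x_0)}_2^2$ and $\norm{\nabla_{x_t}\ln p(x_t\mid y)}_2^2$, which establishes equality of minimizers. I expect the main obstacle to be the interchange of differentiation and integration in the cross-term step: this is precisely where the regularity and integrability hypotheses inherited from \cite[Theorem 1]{song2021maximum} are required, so that the swap is legitimate and any boundary terms vanish through decay of $p(x_t \mid x_0)$ at infinity. Everything else is bookkeeping, and the conditional structure enters only through the repeated use of $p(x_t\mid x_0,y)=p(x_t\mid x_0)$.
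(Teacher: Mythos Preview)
Your proposal is correct and rests on the same two ingredients the paper uses: the conditional independence $p(x_t\mid x_0,y)=p(x_t\mid x_0)$ and the denoising score-matching identity of \cite{vincent2011connection}. The organization differs slightly. The paper first fixes $(y,t)$, sets $q(x_0):=p(x_0\mid y)$ and $q(x_t\mid x_0):=p(x_t\mid x_0,y)$, and then invokes \cite[Eq.~11]{vincent2011connection} as a black box to equate the two inner expectations up to a $\theta$-free constant (this is packaged as a separate lemma); afterwards it integrates over $(y,t)$ using the tower law. You instead expand the squared norm globally into quadratic, cross, and constant pieces and match the $\theta$-dependent pieces directly---which is exactly what Vincent's proof does internally. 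So your argument is the ``inlined'' version of the paper's: the paper's route is a bit cleaner because it delegates the expand-and-match calculation to a citation, while yours is more self-contained and makes explicit where the interchange of $\nabla_{x_t}$ and the $x_0$-integral is needed. Either way the conclusion is the same: the two objectives differ by an additive constant in $\theta$, hence share minimizers.
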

\noindent
The proof for this statement can be found in Appendix \ref{appendix:minimizers}. 
Using the above theorem, the consistency of the estimator can be established.
\begin{corollary}
    Let $\theta^\ast$ be a minimizer of a Monte Carlo approximation of (\ref{CDN}), then (under technical assumptions, cf. Appendix \ref{appendix:consistency}) the conditional denoising estimator $s_{\theta^\ast}(x,y,t)$ is a consistent estimator of the conditional score $\nabla_{x_t} \ln p(x_t | y)$, i.e.
    \begin{gather*}
        s_{\theta^\ast}(x,y,t) \overset{P}{\to} \nabla_{x_t} \ln p(x_t | y)   
    \end{gather*}
    as the number of Monte Carlo samples approaches infinity.
\end{corollary}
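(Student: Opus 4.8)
The plan is to treat this as a standard consistency result for M-estimators (empirical risk minimizers), in which the population risk is the objective (\ref{CDN}) and its Monte Carlo counterpart is the empirical average over $N$ i.i.d. draws of $(t, x_0, y, x_t)$. Write $R(\theta)$ for the population objective in (\ref{CDN}) and $R_N(\theta)$ for the empirical risk obtained by replacing $\mathbb{E}$ with $\frac{1}{N}\sum$; the stated $\theta^\ast$ is then $\theta^\ast_N \in \argmin_\theta R_N(\theta)$. First I would pin down the population target. By Theorem \ref{thm:CDE_consistency}, any minimizer of $R$ also minimizes the idealized objective $\frac{1}{2}\mathbb{E}_{t,(x_t,y)}[\lambda(t)\,\|\nabla_{x_t}\ln p(x_t|y) - s_\theta(x_t,y,t)\|_2^2]$, and under the expressiveness (realizability) assumption that the model class contains the true conditional score, the population minimizer $\theta_0$ satisfies $s_{\theta_0}(x,y,t) = \nabla_{x_t}\ln p(x_t|y)$ wherever $\lambda(t)>0$ and $p(x_t,y)>0$. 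This is the step that converts Theorem \ref{thm:CDE_consistency} into an identification of the limiting object.

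The core of the argument is then the classical argmin-consistency theorem from M-estimation theory: if $R_N \to R$ uniformly in probability over the parameter space and $R$ has a well-separated minimizer $\theta_0$, then $\theta^\ast_N \overset{P}{\to} \theta_0$. To apply it I would verify its two hypotheses. For uniform convergence I would impose that $\Theta$ is compact, that $\theta \mapsto s_\theta(x,y,t)$ is continuous, and that the integrand admits an integrable envelope (a dominating function independent of $\theta$); a uniform law of large numbers of Glivenko--Cantelli / Jennrich type then upgrades the pointwise convergence $R_N(\theta)\to R(\theta)$ to $\sup_{\theta\in\Theta}|R_N(\theta)-R(\theta)|\overset{P}{\to}0$. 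For identifiability I would assume the minimizer of $R$ is unique and well-separated, i.e. $\inf_{\theta:\,d(\theta,\theta_0)\geq\varepsilon} R(\theta) > R(\theta_0)$ for every $\varepsilon>0$. These are precisely the technical assumptions deferred to the appendix. With $\theta^\ast_N \overset{P}{\to} \theta_0$ established, I would close by the continuous mapping theorem: continuity of $\theta \mapsto s_\theta(x,y,t)$ transports parameter convergence to estimator convergence, giving $s_{\theta^\ast_N}(x,y,t) \overset{P}{\to} s_{\theta_0}(x,y,t) = \nabla_{x_t}\ln p(x_t|y)$, which is the claim.

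I expect the main obstacle to be the uniform law of large numbers together with the well-separatedness of the minimizer. Pointwise convergence of $R_N$ is immediate from the ordinary LLN, but promoting it to uniform convergence requires genuine control of the function class $\{s_\theta\}_{\theta\in\Theta}$ (compactness of $\Theta$ plus an integrable envelope, or equivalently a suitable equicontinuity or metric-entropy condition), and these are the hypotheses most sensitive to the network architecture. The well-separated-minimizer condition is likewise delicate: under over-parametrization with symmetric or degenerate minima the empirical minimizer need not converge as a point in $\Theta$, even when the induced function $s_{\theta^\ast_N}$ does. Consequently the cleanest route may be to state identifiability and convergence directly at the level of the function $s_\theta$, measured in the weighted $L^2$ norm induced by $\lambda(t)$ and $p(x_t,y)$, rather than at the level of the parameter $\theta$; this sidesteps parameter non-identifiability while still delivering the stated convergence of $s_{\theta^\ast}$ to the conditional score.
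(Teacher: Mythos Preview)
Your proposal is correct and follows essentially the same route as the paper: cast the problem as consistency of an extremum estimator, invoke a Jennrich-type uniform law of large numbers under compactness of $\Theta$ (and of the data space) plus an envelope condition, apply the standard argmin-consistency theorem to get $\theta^\ast_N \overset{P}{\to} \theta_0$, and then use Theorem~\ref{thm:CDE_consistency} together with a uniqueness/realizability assumption to identify $s_{\theta_0}$ with the conditional score. The paper makes the ``single i.i.d.\ draw'' structure explicit by collapsing the iterated expectation into one expectation over $z=(t,x_0,x_t,y)$ before applying the ULLN, and it leaves the final continuous-mapping step implicit; your write-up makes that last step explicit and also flags the parameter non-identifiability issue under over-parametrization, which the paper sidesteps by simply assuming a unique parameter $\theta^\ast$.
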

\noindent
This follows from the previous theorem and the uniform law of large numbers. Proof in the Appendix \ref{appendix:consistency}.

\subsubsection{Conditional diffusive estimator (CDiffE)}
\label{sec:CDiffE}
Conditional diffusive estimators (CDiffE) have first been suggested in \cite{song2021sde}. The core idea is that instead of learning $p(x_t | y)$ directly, we diffuse both $x$ and $y$ and approximate $p(x_t | y_t)$, using the denoising score matching. Just like learning diffused distribution $\nabla_{x_t} \ln p(x_t)$ improves upon direct estimation of $\nabla_{x} \ln p(x)$ \cite{song2020generative_score, song2021sde}, diffusing both the input $x$ and condition $y$, and then learning $\nabla_{x_t} \ln p(x_t | y_t)$ could make optimization easier and give better results than learning  $\nabla_{x_t} \ln p(x_t | y)$ directly.
    
In order to learn $p(x_t | y_t)$, observe that
\begin{gather*}
    \nabla_{x_t}\ln p(x_t | y_t) = \nabla_{x_t}\ln p(x_t, y_t) = \nabla_{z_t}\ln p(z_t)[:n_x],
\end{gather*}
where $z_t := (x_t, y_t)$ and $n_x$ is the dimensionality of $x$. Therefore we can learn the (unconditional) score of the joint distribution $p(x_t, y_t)$ using the denoising score matching objective just like as in the unconditional case, i.e
\begin{gather}
    \label{CDF}
\begin{aligned}
    &\frac{1}{2} \mathbb{E}_{\subalign{&t \sim U(0,T)\\ &z_0 \sim p_0(z_0) \\ &z_t \sim p(z_t | z_0)}} [\lambda(t) \norm{\nabla_{z_t} \ln{p(z_t |z_0)} - s_\theta(z_t,t)}_2^2].
\end{aligned}
\end{gather}
We can then extract our approximation for the conditional score $\nabla_{x_t} \ln p(x_t|y_t)$ by simply taking the first $n_x$ components of $s_\theta(x_t, y_t,t)$.

The aim is to approximate $\nabla_{x_t} \ln p(x_t | y)$ with $\nabla_{x_t} \ln p(x_t|\hat{y_t})$, where $\hat{y}_t$ is a sample from $p(y_t | y)$. Of course this approximation is imperfect and introduces an error, which we call the \textit{approximation error}. CDiffE aims to achieve smaller optimization error by diffusing the condition $y$ and making the optimization landscape easier, at a cost of making this approximation error.

Now in order to obtain samples from the conditional distribution, we sample a point $x_T \sim \pi$ and integrate
\begin{gather*}
    dx = [\mu(x,t) - \sigma(t)^2 \nabla_{x} \ln p_{X_t|Y_t}(x | \hat{y}_t)]dt + \sigma(t)d\tilde{w}
\end{gather*}
from $T$ to $0$, sampling $\hat{y}_t \sim p(y_t | y)$ at each time step.

% For completeness, we state formally the consistency of the conditional diffusive estimator.

% \begin{theorem}
%     Let $\theta^\ast$ be a minimizer of a Monte Carlo approximation of \ref{CDF}. Then assuming a sufficient model capacity, the conditional denoising estimator $s_{\theta^\ast}(x,y,t)[:n_x]$ is a consistent estimator of the diffused conditional score $\nabla_{x_t} \ln p(x_t | y_t)$ i.e.
%     \begin{gather*}
%         s_{\theta^\ast}(x,y,t)[:n_x] \overset{P}{\to} \nabla_{x_t} \ln p(x_t | y_t)   
%     \end{gather*}
%     as the number of Monte Carlo samples approaches infinity.
% \end{theorem}
% The above follows form the fact that $\argmin_\theta \mathcal{L}_\text{SM}(\theta) = \argmin_\theta \mathcal{L}_\text{DSM}(\theta)$ \cite{vincent2011connection} and the uniform law of large numbers. See Appendix \ref{appendix:proofs} for technical details.

\subsubsection{Conditional multi-speed diffusive estimator (CMDE)}
\label{sec:CMDE}
\begin{figure}
\begin{mdframed}
    \begin{center}
       \textbf{Sources of error for different estimators}
    \end{center}
    \textbf{CDE}\\
    Optimization error:
    \begin{gather*}
    s_\theta(x,y,t) \approx \nabla_{x_t}\ln p(x_t|y)   
    \end{gather*}
    \textbf{CDiffE and CMDE}\\
    Optimization error:
    \begin{gather*}
        s_\theta(x,y,t) \approx \nabla_{x_t}\ln p(x_t|y_t)   
        \end{gather*}
    Approximation error:
    \begin{gather*}
        \nabla_{x_t}\ln p(x_t|\hat{y}_t) \approx \nabla_{x_t}\ln p(x_t|y)   
        \end{gather*}
    CDiffE aims to achieve smaller optimization error at a cost of higher approximation error. By controlling the diffusion speed of $y$, CMDE tries to find an optimal balance between optimization error and approximation error.
\end{mdframed}
\caption{Sources of error for different estimators}
\label{fig:box}
\end{figure}
In this section we present a novel estimator for the conditional score $\nabla_{x_t} \ln p(x_t | y)$ which we call the \textit{conditional multi-speed diffusive estimator} (CMDE). 

Our approach is based on two insights. Firstly, there is no reason why $x_t$ and $y_t$ in conditional diffusive estimation need to diffuse at the same rate. Secondly, by decreasing the diffusion rate of $y_t$ while keeping the diffusion speed of $x_t$ the same, we can bring $p(x_t |y_t)$ closer to $p(x_t |y)$, at the possible cost of making the optimization more difficult. This way we can \emph{interpolate} between the conditional denoising estimator and the conditional diffusive estimator and find an optimal balance between optimization error and approximation error (cf. Figure \ref{fig:box}). This can lead to a better performance, as indicated by our experimental findings (cf. Section \ref{sec:experiments}).

In our conditional multi-speed diffusive estimator, $x_t$ and $y_t$ diffuse according to SDEs with the same drift but different diffusion rates,
\begin{gather*}
    dx = \mu(x,t)dt+\sigma^x(t)dw  \\
    dy = \mu(y,t)dt+\sigma^y(t)dw.
\end{gather*}

Then, just like in the case of conditional diffusive estimator, we try to approximate the joint score $\nabla_{x_t, y_t} \ln p(x_t, y_t)$ with a neural network. Since $x_t$ and $y_t$ now diffuse according to different SDEs, we need to take this into account and replace the weighting function $\lambda(t):\mathbb{R} \xrightarrow{} \mathbb{R}_+ $ with a positive definite weighting matrix $\Lambda(t): \mathbb{R} \xrightarrow{} \mathbb{R}^{(n_x + n_y) \times (n_x + n_y)}$. Hence, the new training objective becomes
\begin{gather}
    \label{eq:CMDE}
    \begin{aligned}
        \frac{1}{2} \mathbb{E}_{\subalign{&t \sim U(0,T)\\ &z_0 \sim p_0(z_0) \\ &z_t \sim p(z_t | z_0)}} 
        [
            v^T \Lambda(t) v
        ],
    \end{aligned}
\end{gather}
where $v=\nabla_{z_t} \ln{p(z_t |z_0)} - s_\theta(z_t,t)$, $z_t=(x_t,y_t)$.

In \cite{song2021maximum} authors derive a likelihood weighting function $\lambda^{\text{MLE}}(t)$, which ensures that the objective of the score-based model upper-bounds the negative log-likelihood of the data, thus enabling approximate maximum likelihood training of score-based diffusion models. We generalize this result to the multi-speed diffusion case by providing a likelihood weighting matrix $\Lambda^{\text{MLE}}(t)$ with the same properties.

\begin{theorem}
    \label{thm:weightning}
    Let $\mathcal{L}(\theta)$ be the CMDE training objective (Equation \ref{eq:CMDE}) with the following weighting:
    \begin{gather*}
        \Lambda^{\text{MLE}}_{i,j}(t) =  
        \begin{cases} 
            \sigma^x(t)^2, \text{ if } i=j, \ i \leq n_x \\ 
            \sigma^y(t)^2, \text{ if } i=j, \ n_x < i \leq n_y  \\
            0, \text{ otherwise}
        \end{cases}            
    \end{gather*}
    Then the joint negative log-likelihood is upper bounded (up to a constant in $\theta$) by the training objective of CMDE
    \begin{gather*}
        -\mathbb{E}_{(x,y) \sim p(x,y)}[\ln p_\theta(x,y)] \leq \mathcal{L}(\theta) + C.
    \end{gather*}
\noindent
\end{theorem}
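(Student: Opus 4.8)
The plan is to recognize the CMDE setup as a special case of the non-uniform diffusion framework of Section~\ref{Non-Uniform Diffusion Models} and then chain together Theorem~\ref{thm:multi-dim}, a matrix-weighted denoising identity, and the elementary relation between KL divergence and cross-entropy. First I would note that the concatenated process $z_t=(x_t,y_t)$ evolves according to the It\^{o} SDE $dz_t=\mu(z_t,t)\,dt+G(t)\,dw_t$ with the block-diagonal diffusion matrix $G(t)=\mathrm{diag}(\sigma^x(t)I_{n_x},\,\sigma^y(t)I_{n_y})$, which is exactly the non-uniform image diffusion of Equation~\ref{non-uniform image diffusion}. A one-line computation then gives $G(t)G(t)^T=\mathrm{diag}(\sigma^x(t)^2 I_{n_x},\,\sigma^y(t)^2 I_{n_y})=\Lambda^{\mathrm{MLE}}(t)$, so the proposed weighting is precisely the likelihood weighting matrix $\Lambda_{MLE}=GG^T$ for the joint process.

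With this identification, I would apply Theorem~\ref{thm:multi-dim} to $z_t$ (its hypotheses being inherited from those of \cite[Theorem~1]{song2021maximum}, which I assume), obtaining
\begin{align*}
KL(p(z_0)\,|\,p^{SDE}_\theta(z_0)) \leq{}& KL(p(z_T)\,|\,\pi(z_T)) \\
&+ \tfrac{1}{2}\,\mathbb{E}_{t,z_t}[v_{SM}^T\,\Lambda^{\mathrm{MLE}}(t)\,v_{SM}],
\end{align*}
where $v_{SM}=\nabla_{z_t}\ln p(z_t)-s_\theta(z_t,t)$ is the intractable score-matching residual. The next step converts this into the denoising objective $\mathcal{L}(\theta)$ of Equation~\ref{eq:CMDE}, which instead uses $v=\nabla_{z_t}\ln p(z_t|z_0)-s_\theta(z_t,t)$. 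For this I would establish the matrix-weighted analogue of Vincent's identity: expanding both quadratic forms and using that $\Lambda^{\mathrm{MLE}}(t)$ is independent of $z_0$ together with the denoising identity $\mathbb{E}_{z_0|z_t}[\nabla_{z_t}\ln p(z_t|z_0)]=\nabla_{z_t}\ln p(z_t)$, the cross terms $s_\theta^T\Lambda\nabla\ln p$ and the quadratic terms $s_\theta^T\Lambda s_\theta$ coincide between the two objectives, so they differ only by the $\theta$-independent quantity $\tfrac12\mathbb{E}[\nabla\ln p(z_t)^T\Lambda\nabla\ln p(z_t)]-\tfrac12\mathbb{E}[\nabla\ln p(z_t|z_0)^T\Lambda\nabla\ln p(z_t|z_0)]$. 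Absorbing this into a constant yields $\tfrac12\mathbb{E}_{t,z_t}[v_{SM}^T\Lambda^{\mathrm{MLE}}v_{SM}]=\mathcal{L}(\theta)+\mathrm{const}$.

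Finally I would translate the KL bound into a likelihood bound. Since $z_0=(x,y)$ and $p^{SDE}_\theta(z_0)=p_\theta(x,y)$, the identity $-\mathbb{E}_{(x,y)\sim p(x,y)}[\ln p_\theta(x,y)]=KL(p(z_0)\,|\,p^{SDE}_\theta(z_0))+H(p(z_0))$ holds, where the differential entropy $H(p(z_0))$ and the term $KL(p(z_T)\,|\,\pi(z_T))$ are both independent of $\theta$. Combining the three relations and collecting every $\theta$-independent term into a single constant $C$ gives the claimed $-\mathbb{E}_{(x,y)}[\ln p_\theta(x,y)]\leq\mathcal{L}(\theta)+C$. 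The main obstacle is the middle step: one must verify that the denoising identity survives replacing the scalar weight $\lambda(t)$ by the matrix $\Lambda^{\mathrm{MLE}}(t)$ (it does, precisely because the weight is symmetric and constant in $z_0$), and one must check that the regularity and integrability conditions of \cite[Theorem~1]{song2021maximum}—finite entropy of $p(z_0)$, Lipschitz/growth conditions on $\mu$ and $G$, and square-integrability of the residuals—genuinely transfer to the concatenated, block-diagonal process rather than merely being assumed for a single uniform diffusion.
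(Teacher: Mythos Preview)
Your proposal is correct and follows essentially the same route as the paper: cast $z=(x,y)$ as a single non-uniform diffusion with block-diagonal $G(t)$ so that $G(t)G(t)^T=\Lambda^{\mathrm{MLE}}(t)$, invoke Theorem~\ref{thm:multi-dim} for the KL bound, pass from score matching to denoising score matching via Vincent's identity (the paper simply cites \cite{vincent2011connection} for this step, while you spell out why the matrix weight causes no trouble), and then convert KL to cross-entropy by adding the entropy of $p(z_0)$. The only difference is emphasis: you are more explicit about why the matrix-weighted denoising identity holds and about which regularity conditions must transfer, whereas the paper absorbs these into citations.
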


%We derive generalized SMLD \cite{song2020generative_score} and likelihood \cite{song2021maximum} weighting  functions \ $\Lambda(t)$ for training with $\sigma^y(t) \not = \sigma^x(t)$.

% \begin{theorem} \ 
%     \begin{itemize}
%         \item SMLD weighting
%             \begin{align*}
%                 \Lambda^\text{SMLD}(t)  &= \lambda^\text{SMLD}(t) I \\
%                 \lambda^\text{SMLD}(t) &:= \frac{1}{\mathbb{E}[\norm{\nabla \ln p(x_t, y_t | x_0, y_0)}]} \\ 
%                 &= \frac{\sigma^x(t)^2 \sigma^y(t)^2}{\sigma^x(t)^2 + \sigma^y(t)^2} 
%             \end{align*}
%         \item Likelihood weighting
%             \begin{gather*}
%                 \Lambda^{\text{MLE}}_{i,j}(t) =  
%                 \begin{cases} 
%                     \sigma^x(t)^2, \text{ if } i=j, \ i \leq n_x \\ 
%                     \sigma^y(t)^2, \text{ if } i=j, \ n_x < i \leq n_y  \\
%                     0, \text{ otherwise}
%                 \end{cases}            
%             \end{gather*}
%     \end{itemize}
% \end{theorem}

\noindent
The proof can be found in Appendix \ref{appendix:weighting}. 

Moreover we show that the mean squared approximation error of a multi-speed diffusion model is upper bounded and the upper bound goes to zero as the diffusion speed of the condition $\sigma^y(t)$ approaches zero.
\begin{theorem}
    Fix $t$, $x_t$ and $y$. Under mild technical assumptions (cf. Appendix \ref{appendix:mse}) there exists a function  $E: \mathbb{R} \xrightarrow{} \mathbb{R}$ monotonically decreasing to $0$, such that
    \begin{gather*}
        \mathbb{E}_{y_t \sim p(y_t|y)}[
            \norm{ \nabla_{x_t} \ln p(x_t|y_t) - \nabla_{x_t} \ln p(x_t|y)}_2^2
            ] \\
            \leq E(1/\sigma^y(t)).
    \end{gather*}
\end{theorem}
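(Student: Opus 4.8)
The plan is to control the score discrepancy by the displacement of the diffused condition from its undiffused value, and then to exploit the concentration of the perturbation kernel $p(y_t\mid y)$ as the diffusion of $y$ vanishes. The key observation is that, at the fixed time $t$, the map $u\mapsto\Phi(u):=\nabla_{x_t}\ln p(x_t\mid u)$ is smooth, so the integrand is $\norm{\Phi(y_t)-\Phi(y)}_2^2$ with $y_t$ random and $y$ fixed. Since the forward dynamics $dy=\mu(y,t)dt+\sigma^y(t)dw$ has a Gaussian transition kernel with mean $m_t$ and covariance $\Sigma_t$, and $\Sigma_t\to 0$ as $\sigma^y(t)\to 0$, the random condition $y_t$ concentrates, and the whole difficulty reduces to turning closeness of $y_t$ to $y$ into closeness of the two scores.

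First I would establish a local Lipschitz bound for $\Phi$: under the regularity assumptions of Appendix~\ref{appendix:mse} the mixed derivative $\nabla_u\nabla_{x_t}\ln p(x_t\mid u)$ is bounded by a constant $M$ on a neighborhood of $y$, so a first-order Taylor expansion gives $\norm{\Phi(y_t)-\Phi(y)}_2\le M\norm{y_t-y}_2$ whenever $y_t$ lies in that neighborhood. Squaring and integrating against $p(y_t\mid y)$ yields $\mathbb{E}[\norm{\Phi(y_t)-\Phi(y)}_2^2]\le M^2\,\mathbb{E}[\norm{y_t-y}_2^2]=M^2\big(\operatorname{tr}(\Sigma_t)+\norm{m_t-y}_2^2\big)$. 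The trace term is an explicit increasing function of $\sigma^y(t)$ that vanishes as $\sigma^y(t)\to 0$, while the mean-bias term is deferred to the assumptions discussed below. Taking $E(1/\sigma^y(t))$ to be this bound, and if necessary replacing it by its monotone decreasing envelope in $1/\sigma^y(t)$, produces exactly the claimed majorant that decreases monotonically to $0$.

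The main obstacle is that a global Lipschitz bound on $\Phi$ is unavailable, since the conditional score can blow up in low-density regions, so the clean estimate only holds on a neighborhood of $y$. I would therefore split the expectation at a radius $R=R(\sigma^y(t))$: on $\{\norm{y_t-y}_2\le R\}$ the local bound applies, while on the complement the contribution is controlled by pairing an assumed polynomial growth bound on $\norm{\Phi(y_t)-\Phi(y)}_2$ with the Gaussian tail decay of $p(y_t\mid y)$, which is super-polynomially small once $R$ is chosen to grow slowly as $\sigma^y(t)\to 0$; balancing the two pieces so that both vanish simultaneously is the quantitative heart of the argument. A second, more conceptual subtlety is that $p(x_t\mid y_t)$ and $p(x_t\mid y)$ are conditionals on genuinely different random variables (the diffused versus the raw condition); reconciling them relies on the fact that in the vanishing-diffusion limit $p(y_t\mid y)$ degenerates to a point mass at $m_t$, under which conditioning on $y_t$ coincides with conditioning on $y$. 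Handling this kernel bias cleanly, e.g.\ by assuming the forward process for $y$ is (approximately) mean preserving so that $m_t\to y$, is the delicate modeling choice hidden in the mild technical assumptions, and is where most of the care in the appendix proof must go.
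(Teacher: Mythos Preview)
Your Lipschitz-plus-concentration idea captures only half of the paper's argument, and the other half is precisely the point you flag as a ``conceptual subtlety'' but do not actually resolve. The integrand is \emph{not} $\norm{\Phi(y_t)-\Phi(y)}_2^2$ for a single function $\Phi$: $p_{X_t\mid Y_t}(x_t\mid\cdot)$ and $p_{X_t\mid Y}(x_t\mid\cdot)$ are two different conditional densities, so even if $y_t=y$ the two scores need not agree. Your proposed fix---that in the limit $\sigma^y(t)\to 0$ the two conditionings coincide---is a limiting statement, not a quantitative bound valid for every $\sigma^y(t)$, which is what the theorem requires.

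The paper closes this gap with a device you do not use: apply Bayes' rule to both scores so that the common term $\partial_{x_t}\ln p(x_t)$ cancels and the difference becomes $\partial_{x_t}\ln p_{Y_t\mid X_t}(y_t\mid x_t)-\partial_{x_t}\ln p_{Y\mid X_t}(y\mid x_t)$. Now the two objects are related by an explicit convolution, since $y_t\mid y$ is Gaussian and conditionally independent of $x_t$, giving $p_{Y_t\mid X_t}(\cdot\mid x_t)=p_{Y\mid X_t}(\cdot\mid x_t)\ast\varphi_{\sigma^y(t)}$ (and similarly for the $x_t$-derivative). The proof then splits into two pieces: (i) a mollifier-approximation lemma $\norm{f\ast\varphi_\sigma-f}_\infty\to 0$ handles the ``different functions'' part in sup-norm, and (ii) your Lipschitz argument handles the ``displacement of the argument'' part, yielding exactly the $L^2\sigma^y(t)^2$ term you derived. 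Without step (i) you have no control over the first piece, and that is the genuine gap in your proposal.

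Two smaller remarks. Under the paper's assumptions (compact data space, $C^2$ density bounded away from zero) the relevant Lipschitz constants are global, so your radius-$R$ truncation and tail argument are unnecessary. And your worry about the mean bias $m_t-y$ is real but handled implicitly: the paper's Lipschitz lemma compares $f(Z)$ to $f(\mu)$ with $\mu=\mathbb{E}[Z]$, which in their VE-SDE setting is indeed $y$.
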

\noindent
The proof can be found in Appendix \ref{appendix:mse}.

Thus we see that the objective of CMDE approaches that of CDE as  $\sigma^y(t) \to 0$, and  CMDE coincides with CDiffE when $ \sigma^y(t) = \sigma^x(t)$ (cf. Figure \ref{fig:box}).

We experimented with different configurations of $\sigma^x(t)$ and $\sigma^y(t)$ and found configurations that lead to  improvements upon CDiffE and  CDE in certain tasks. The experimental results are discussed in detail in Section \ref{sec:experiments}.

\section{Experiments}
\label{sec:experiments}

\subsection{Multiscale diffusion}\label{Multiscale diffusion experiments}
In this part of the experimental section, we compare the performance of the multiscale model that depends on non-uniform pixel diffusion to the performance of the standard model that depends on uniform diffusion. We train and evaluate both models on Imagenet $128\times 128$ and CelebA-HQ $128\times 128$.

For the standard diffusion model, we use the beta-linear VP SDE \cite{ho2020denoising} and train the score model using the simple objective \cite{dhariwal2021diffusion_beats_gans} because it is experimentally shown to favor generation quality. The architecture of the score model follows the architecture of \cite{dhariwal2021diffusion_beats_gans}.

For the multiscale model, we use 3-level haar transform to transform the original images, which means that we create a multiscale model with four scales. For this reason, we use four score models $s_{\theta_1},s_{\theta_2},s_{\theta_3},s_{\theta_4}$ which approximate the score function in following diffusion ranges respectively $[\epsilon, 0.25], [0.25, 0.50], [0.50, 0.75], [0.75, 1]$. The reason we do not use $s_{\theta_1}$ to approximate the score function for the entire diffusion is that we stop the diffusion of the highest frequency detail coefficients $d_1$ at time $0.25$, as they reach the target minimum SNR (by design of the forward SDE). The remaining diffusing tensor has a quarter of the dimensionality of the original tensor. Therefore, we need a less expressive neural network to approximate the score function in the next diffusion time range. The architecture of all models follows the architecture of \cite{dhariwal2021diffusion_beats_gans}. We choose the number of base channels and the depth of the multiscale score models so that the total number of parameters of the multiscale model is approximately equal to the number of parameters of the standard diffusion model to ensure fair comparison. For the diffusion of each haar coefficient, we use a variance preserving process with log-linear SNR profile. We choose the maximum SNR (at $t=\epsilon$) and the minimum SNR (achieved at the terminal diffusion time for each coefficient) to match the maximum SNR and minimum SNR of the standard model respectively.

We evaluate both models using the FID score on 50K samples. We generate each sample by numerically integrating the reverse SDE with 256 total euler-maruyama steps. Our results (see tables \ref{tbl:ImageNet}, \ref{tbl:CelebA}) show that for the same training time, the multiscale model achieves better FID score with significantly faster sampling speed ($4.4$ times faster). In fact, our results on ImageNet show that the multiscale model achieves improved FID score with faster sampling speed and less training time. The FID scores are higher than reported scores in prior work for both the multiscale and the standard model because we did not use Tweedie's formula for denoising the last step. We verified that by integrating the corresponding probability flow ODEs using the euler method. In that case, we got lower FID scores for both methods but the relative performance remained the same. Moreover, we used lighter neural networks than prior works to approximate the score function which led to generally worse performance. We opted for lighter models in this study because we wanted to conduct a fair comparison of the multiscale diffusion model and the standard uniform diffusion model. Improved techniques that led to state-of-the-art performance of the uniform diffusion model such as class conditioning and learning of the variance schedule \cite{dhariwal2021diffusion_beats_gans} can also be readily employed in the multiscale model. Given the superiority of the multiscale model, we expect the employment of improved techniques to further improve the performance of the multiscale model and potentially redefine the state-of-the-art. We intend to explore this direction in the future.

The training and sampling speed-up is attributed to the fact that we approximate the score of lower dimensional distributions for the majority of the diffusion. Therefore, we expect higher relative speed-ups in higher resolutions. We believe that the effectiveness of the multiscale model is attributed to the effectiveness of cascaded diffusion observed in previous works \cite{saharia2021sr3, dhariwal2021diffusion_beats_gans}. The difference between our multiscale model and the previous works is that it does not suffer from the effect of the compounding error. Ho et al. \cite{saharia2021sr3} improve the performance of cascading models by using an expensive post-training tuning step which they call conditioning augmentation. Our multiscale model essentially employs a cascading modeling structure that does not require any post-training tuning for improved sample generation.

\begin{table*}[h!]
    \begin{center}
    \caption{Multiscale and Vanilla model comparison on ImageNet 128x128}
    \label{tbl:ImageNet}
    \begin{tabular}{ccccccc}
    \toprule
    & Iterations & Parameters & Training (hours) $\downarrow$  & Sampling (secs) $\downarrow$ & FID $\downarrow$ \\
    \midrule
    \multirow{1}{*}{Vanilla} 
    &1M & 100M & 191.0  & 53.5 & 79.21    \\
    \midrule
    \multirow{1}{*}{Multiscale}
    &2M & 100M & 136.6  & 12.1 & 70.87 \\

    \midrule
    \multirow{1}{*}{Multiscale +}
    &2M & 200M & 151.0   & 18.7 & 65.50  \\
    \bottomrule
    \end{tabular}
    \end{center}
\end{table*}

\begin{table*}[h!]
    \begin{center}
    \caption{Multiscale and Vanilla model comparison on CelebA-HQ 128x128}
    \label{tbl:CelebA}
    \begin{tabular}{ccccccc}
    \toprule
    & Iterations & Parameters & Training (hours) $\downarrow$  & Sampling (secs) $\downarrow$ & FID $\downarrow$ \\
    \midrule
    \multirow{1}{*}{Vanilla} 
    &0.67M & 100M & 128  & 53.5 & 54.3    \\
    \midrule
    \multirow{1}{*}{Multiscale}
    &2.54M & 100M & 128  & 12.1 & 31.8 \\

    \midrule
    \multirow{1}{*}{Multiscale +}
    &1.76M & 200M & 128  & 18.7 & 33.5  \\
    \bottomrule
    \end{tabular}
    \end{center}
\end{table*}

\subsection{Conditional Generation}
\begin{figure*}
    \captionsetup[subfigure]{labelformat=empty}
    \begin{subfigure}{.135\textwidth}
        \includegraphics[width=\textwidth]{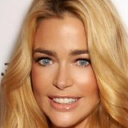}
        \caption{\scriptsize Original image $x$}
    \end{subfigure}
    \begin{subfigure}{.135\textwidth}
        \includegraphics[width=\textwidth]{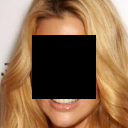}
        \caption{\scriptsize Observation $y := Ax$}
    \end{subfigure}
    \begin{subfigure}{.135\textwidth}
        \includegraphics[width=\textwidth]{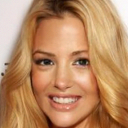}
        \caption{\scriptsize Reconstruction $\hat{x}_1$}
    \end{subfigure} 
    \begin{subfigure}{.135\textwidth}
        \includegraphics[width=\textwidth]{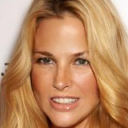}
        \caption{\scriptsize Reconstruction $\hat{x}_2$}
    \end{subfigure}    
    \begin{subfigure}{.135\textwidth}
        \includegraphics[width=\textwidth]{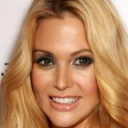}
        \caption{\scriptsize Reconstruction $\hat{x}_3$}
    \end{subfigure}    
    \begin{subfigure}{.135\textwidth}
        \includegraphics[width=\textwidth]{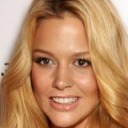}
        \caption{\scriptsize Reconstruction $\hat{x}_4$}
    \end{subfigure}
    \begin{subfigure}{.135\textwidth}
        \includegraphics[width=\textwidth]{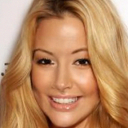}
        \caption{\scriptsize Reconstruction $\hat{x}_5$}
    \end{subfigure} 
    \caption{Diversity of five different CMDE reconstructions for a given masked image.}
\end{figure*}

In this section we conduct a systematic comparison of different score-based diffusion approaches to modelling conditional distributions of image data. We evaluate these approaches on the tasks of super-resolution, inpainting and edge to image translation.

\noindent
\textbf{Datasets} In our experiments, we use the CelebA \cite{2015celeba} and Edges2shoes \cite{yu2014sketch2shoe,isola2018pix2pix} datasets. We pre-processed the CelebA dataset as in \cite{liang2021hrflow}.

\noindent
\textbf{Models and hyperparameters} In order to ensure the fair comparison, we separate the evaluation of a particular estimator of conditional score from the evaluation of a particular neural network model. To this end, we train the same neural network architecture for all estimators. The architecture is based on the DDPM model used in \cite{ho2020denoising, song2021sde}. 
We used the variance-exploding SDE \cite{song2021sde} given by: 
$$dx = \sqrt{\frac{d}{dt}\sigma^2(t)}dw, \hspace{1cm}
\sigma(t) = \sigma_{min} \left(\frac{\sigma_{max}}{\sigma_{min}}\right)^t$$
Likelihood weighting was employed for all experiments. For CMDE, the diffusion speed of $y$ was controlled by picking an appropriate $\sigma^y_{\max}$, which we found by trial-and-error. The performance of CMDE could be potentially improved by performing a systematic hyperparameter search for optimal $\sigma^y_{\max}$.
Details on hyperparameters and architectures used in our experiments can be found in Appendix \ref{appendix:hyperparams}.

\noindent 
\textbf{Inverse problems} The tasks of inpainting, super-resolution and edge to image translation are special cases of inverse problems \cite{arridge2019ip, muller2012ip}. In each case, we are given a (possibly random) forward operator $A$ which maps our data $x$ (full image) to an observation $y$ (masked image, compressed image, sketch). The task is to come up with a high-quality reconstruction $\hat{x}$ of the image $x$ based on an observation $y$. The problem of reconstructing $x$ from $y$ is typically ill-posed, since $y$ does not contain all information about $x$. Therefore, an ideal algorithm would produce a reconstruction $\hat{x}$, which looks like a realistic image (i.e. is a likely sample from $p(x)$) and is consistent with the observation $y$ (i.e. $A\hat{x} \approx y$). Notice that if a conditional score model learns the conditional distribution correctly, then our reconstruction $\hat{x}$ is a sample from the posterior distribution $p(x | y)$, which satisfies bespoke requirements. This strategy for solving inverse problems is generally referred to as \emph{posterior sampling}.

%\begin{figure}
%    \captionsetup[subfigure]{labelformat=empty}
%    \begin{subfigure}{.15\textwidth}
%        \includegraphics[width=\textwidth]{}
%        \caption{\scriptsize Original image $x$}
%    \end{subfigure}
%    \begin{subfigure}{.15\textwidth}
%        \includegraphics[width=\textwidth]{}
%        \caption{\scriptsize Observation $y:=Ax$}
%    \end{subfigure}
%    \begin{subfigure}{.15\textwidth}
%        \includegraphics[width=\textwidth]{}
%        \caption{\scriptsize Reconstruction $\hat{x}$}
%    \end{subfigure} 
%\end{figure}

\noindent
\textbf{Evaluation: Reconstruction quality} Ill-posedness often means that we should not strive to reconstruct $x$ perfectly. Nonetheless reconstruction error does correlate with the performance of the algorithm and has been one of the most widely-used metrics in the community. To evaluate the reconstruction quality for each task, we measure the Peak signal-to-noise ratio (PSNR) \cite{zhou2004psnr+ssim}, Structural similarity index measure (SSIM) \cite{zhou2004psnr+ssim} and Learned Perceptual Image Patch Similarity (LPIPS) \cite{zhang2018lpips} between the original image $x$ and the reconstruction $\hat{x}$.

\noindent
\textbf{Evaluation: Consistency}
In order to evaluate the consistency of the reconstruction, for each task we calculate the PSNR between $y:=Ax$ and $\hat{y}:=A\hat{x}$. 

\noindent
\textbf{Evaluation: Diversity}
We evaluate diversity of each approach by generating five reconstructions $(\hat{x})_{i=1}^5$ for a given observation ${y}$. Then for each $y$ we calculate the average standard deviation for each pixel among the reconstructions  $(\hat{x})_{i=1}^5$ . Finally, we average this quality over 5000 test observations.

\noindent
\textbf{Evaluation: Distributional distances}
If our algorithm generates realistic reconstructions while preserving diversity, then the distribution of reconstructions $p(\hat{x})$ should be similar to the distribution of original images $p(x)$. Therefore, we measure the Fr\'{e}chet Inception Distance (FID) \cite{heusel2018fid} between unconditional distributions $p(x)$ and $p(\hat{x})$ based on 5000 samples. Moreover, we calculate the FID score between the joint distributions $p(\hat{x}, y)$ and $p(x,y)$, which allows us to simultaneously check the realism of the reconstructions and the consistency with the observation. 
We use abbreviation UFID to refer to FID between between unconditional distributions and JFID to refer to FID between joints.  In our judgement, FID and especially the JFID is the most principled of the used metrics, since it measures how far $p_\theta(x | y)$ is from $p(x|y)$.

\begin{table*}
    \begin{center}
    \caption{Results of conditional generation tasks.}
    \label{tbl:results}
    \begin{tabular}{cccccccc}
    \toprule
    &Estimator & PSNR/SSIM $\uparrow$  & LPIPS $\downarrow$ & UFID/JFID $\downarrow$ & Consistency $\uparrow$ & Diversity $\uparrow$ \\
    \midrule
    \multirow{3}{*}{Inpainting} 
    &CDE & \textbf{25.12}/\textbf{0.870}  & \textbf{0.042} & 13.07/18.06 & \textbf{28.54} & 4.79  \\
    &CDiffE & 23.07/0.844   & 0.057 & 13.28/19.25 &  26.61 & \textbf{6.52}   \\
    &CMDE ($\sigma^y_{max} = 1$) & 24.92/0.864  & 0.044 & \textbf{12.07/17.07} & 28.32 & 4.98  \\
    \midrule
    \multirow{4}{*}{Super-resolution} 
    &CDE & 23.80/0.650  & 0.114 & 10.36/15.77 & 54.18 & \textbf{8.51}  \\
    &CDiffE & 23.83/0.656  & 0.139 & 14.29/20.20 & 51.90 & 7.41  \\
    &CMDE ($\sigma^y_{max} = 0.5$) & 23.91/0.654  & 0.109 & \textbf{10.28/15.68} & 53.03 & 8.33  \\
    \midrule
    \multirow{3}{*}{Edge to image} 
    &CDE & \textbf{18.35/0.699}  & \textbf{0.156} & \textbf{11.87/21.31} & \textbf{10.45} & 14.40 \\
    &CDiffE & 10.00/0.365   & 0.350 & 33.41/55.22  & 7.78 & \textbf{43.45} \\
    &CMDE ($\sigma^y_{max} = 1$) & 18.16/0.692  & 0.158 & 12.62/22.09 & 10.38 & 15.20  \\
    \bottomrule
    \end{tabular}
    \end{center}
\end{table*}

\subsubsection{Inpainting}

We perform the inpainting experiment using CelebA dataset. In inpainting, the forward operator $A$ is an application of a given binary mask to an image $x$.  In our case, we made the task more difficult by using randomly placed (square) masks. Then the conditional score model is used to obtain a reconstruction $\hat{x}$ from the masked image $y$. We select the position of the mask uniformly at random and cover $25\%$ of the image. The quantitative results are summarised in Table \ref{tbl:results} and samples are presented in Figure \ref{fig:inpainting}. We observe that CDE and CMDE significantly outperform CDiffE in all metrics, with CDE having a small advantage over CMDE in terms of reconstruction error and consistency. On the other hand, CMDE achieves the best FID scores.
%We recall that a perfect reconstruction on a per-image basis is generally not desirable due to ill-posedness of the inverse problem and therefore in our view FID is the most principled of the used metrics.

%We notice that all score based methods achieve remarkable consistency JS{Add metrics once available}. This contrasts  GAN-based approaches which tend to imagine new facial structures that aren't there in the visible region \cite{daras2021intermediate, gabbay2019style__inversion_image_enhancement,  marinescu2021bayesian_reconstruction_gans, wei2021gan_inversion}. \JS{Reformulate the following } This is indeed a very surprising feature of score-based diffusion models as the consistency is not enforced in a hard way. 

\begin{figure}
\renewcommand{\arraystretch}{1.25}
    \begin{tabular}{lccc}
        \begin{tabular}{@{}l@{}}
            Original image $x$
            \\[25pt]
        \end{tabular}
         & \includegraphics[width=.08\textwidth]{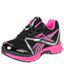} & \includegraphics[width=.08\textwidth]{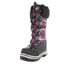} & \includegraphics[width=.08\textwidth]{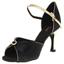} \\
        
        \begin{tabular}{@{}l@{}}
            Observation \\ $ y := Ax$
            \\[25pt]
        \end{tabular}
         & \includegraphics[width=.08\textwidth]{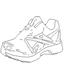} & \includegraphics[width=.08\textwidth]{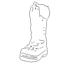} & \includegraphics[width=.08\textwidth]{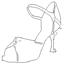} \\
        
         \begin{tabular}{@{}c@{}}
            CDE
            \\[25pt]
        \end{tabular} & \includegraphics[width=.08\textwidth]{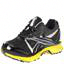}  & \includegraphics[width=.08\textwidth]{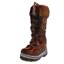}  & \includegraphics[width=.08\textwidth]{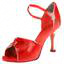} \\

        \begin{tabular}{@{}c@{}}
            CDiffE
            \\[25pt]
        \end{tabular} & \includegraphics[width=.08\textwidth]{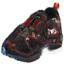} & \includegraphics[width=.08\textwidth]{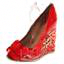} & \includegraphics[width=.08\textwidth]{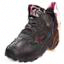}\\

        \begin{tabular}{@{}c@{}}
            CMDE (Ours)
            \\[25pt]
        \end{tabular}  &  \includegraphics[width=.08\textwidth]{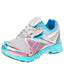} &  \includegraphics[width=.08\textwidth]{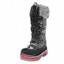} &  \includegraphics[width=.08\textwidth]{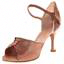} 
    \end{tabular}
    \caption{Edge to image translation results.}
    \label{fig:edges-to-shoes}
\end{figure}

\begin{figure*}
    \begin{center}
        \begingroup
        \setlength{\tabcolsep}{2pt}
    \begin{tabular}{ccccc}
        Original image $x$ & Observation $y$ & CDE & CDiffE & CMDE (Ours) \\

        \includegraphics[width=.15\textwidth]{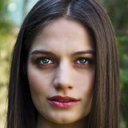} &   
        \includegraphics[width=.15\textwidth]{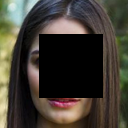} &
        \includegraphics[width=.15\textwidth]{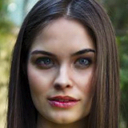} & 
        \includegraphics[width=.15\textwidth]{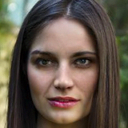} & 
        \includegraphics[width=.15\textwidth]{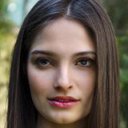} \\

        \includegraphics[width=.15\textwidth]{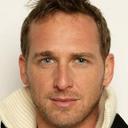} &   
        \includegraphics[width=.15\textwidth]{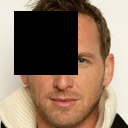} &
        \includegraphics[width=.15\textwidth]{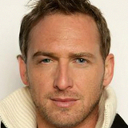} &
        \includegraphics[width=.15\textwidth]{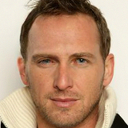} & 
        \includegraphics[width=.15\textwidth]{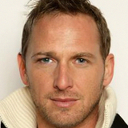} \\
 
        \includegraphics[width=.15\textwidth]{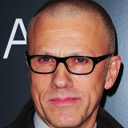} &   
        \includegraphics[width=.15\textwidth]{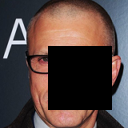} &
        \includegraphics[width=.15\textwidth]{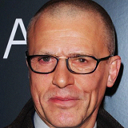} & 
        \includegraphics[width=.15\textwidth]{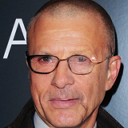} & 
        \includegraphics[width=.15\textwidth]{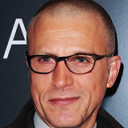} \\
    \end{tabular}
    \endgroup
    \end{center}
    \caption{Inpainting results.}
    \label{fig:inpainting}
\end{figure*}

\subsubsection{Super-resolution}
We perform 8x super-resolution using the CelebA dataset. A high resolution 160x160 pixel image $x$ is compressed to a low resolution 20x20 pixels image $y$. Here we use bicubic downscaling \cite{keyes1981bicubic} as the forward operator  $A$. Then using a score model we obtain a 160x160 pixel reconstruction image $\hat{x}$. The quantitative results are summarised in Table \ref{tbl:results} and samples are presented in Figure \ref{fig:super-resolution}. We find that CMDE and CDE perform similarly, while significantly outperforming CDiffE. CMDE achieves the smallest reconstruction error and captures the distribution most accurately according to FID scores.

\begin{figure*}
    \begin{center}
    \begingroup
    \setlength{\tabcolsep}{2pt}

    \begin{tabular}{cccccc}
        Original image $x$ & Observation $ y$ & HCFlow & CDE & CDiffE & CMDE (Ours) \\

        \includegraphics[width=.15\textwidth]{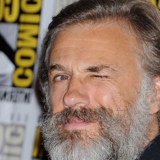} &   
        \includegraphics[width=.15\textwidth]{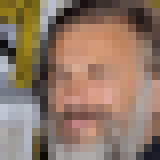} &
        \includegraphics[width=.15\textwidth]{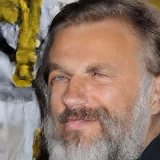} &
        \includegraphics[width=.15\textwidth]{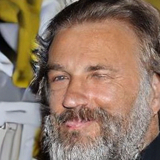} & 
        \includegraphics[width=.15\textwidth]{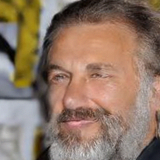} &
        \includegraphics[width=.15\textwidth]{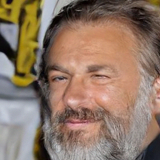} \\

        \includegraphics[width=.15\textwidth]{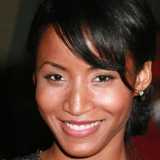} &   
        \includegraphics[width=.15\textwidth]{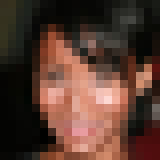} &
        \includegraphics[width=.15\textwidth]{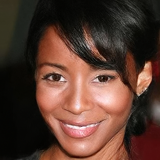} &
        \includegraphics[width=.15\textwidth]{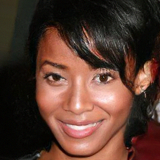} & 
        \includegraphics[width=.15\textwidth]{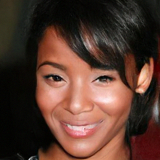} &
        \includegraphics[width=.15\textwidth]{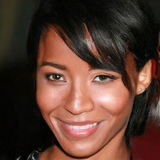} \\

        \includegraphics[width=.15\textwidth]{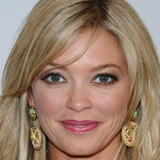} &   
        \includegraphics[width=.15\textwidth]{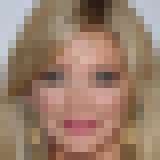} &
        \includegraphics[width=.15\textwidth]{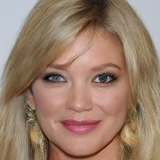} &
        \includegraphics[width=.15\textwidth]{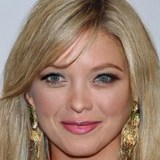} & 
        \includegraphics[width=.15\textwidth]{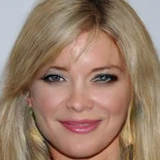} &
        \includegraphics[width=.15\textwidth]{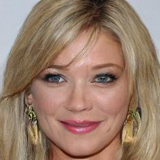} \\
    \end{tabular}
    \endgroup
    \end{center}
    \caption{Super-resolution results.}
    \label{fig:super-resolution}
\end{figure*}

\subsubsection{Edge to image translation}
We perform an edge to image translation task on the Edges2shoes dataset. The forward operator $A$ is given by a neural network edge detector \cite{xie2015edges}, which takes an original photo of a shoe $x$ and transforms it into a sketch $y$. Then a conditional score model is used to create an artificial photo of a shoe $\hat{x}$ matching the sketch. The quantitative results are summarised in Table \ref{tbl:results} and samples are presented in Figure \ref{fig:edges-to-shoes}. Unlike in inpainting and super-resolution where CDiffE achieved reasonable performance, in edge to image translation, it fails to create samples consistent with the condition (which leads to inflated diversity scores). CDE and CMDE are comparable, but CDE performed slightly better across all metrics. However, the performance of CMDE could be potentially improved by  tuning the diffusion speed $\sigma^y(t)$.

\section{Conclusions and future work}

In this article, we explored non-uniform diffusion models which rely on the idea of diffusing different parts of the tensor with different speeds or more generally according to different SDEs. We show that non-uniform diffusion leads to multiscale diffusion models which are more efficient and effective than standard uniform diffusion models for unconditional generation. More specifically, multiscale diffusion models achieve improved FID score with significantly faster sampling speed and for less training time.

We further discovered that non-uniform diffusion leads to CMDE, a novel estimator of the conditional score which can interpolate between conditional denoising estimator (CDE) and conditional diffusive estimator (CDiffE). We conducted a systematic comparison of different estimators of the conditional score and concluded that CMDE and CDE perform on par, while significantly outperforming CDiffE. This is particularly apparent in edge to image translation, where CDiffE fails to produce samples consistent with the condition image. Furthermore, CMDE outperformed CDE in terms of FID scores in inpainting and super-resolution tasks, which indicates that diffusing the condition at the appropriate speed can have beneficial effect on the optimization landscape, and yield better approximation of the posterior distribution. Furthermore, we provided theoretical analysis of the estimators of conditional score. More importantly, we proved the consistency of the conditional denoising estimator, thus providing a firm theoretical justification for using it in future research. 

%Multi-speed diffusion framework introduced in this paper opens avenues for an future research. A direct application is the introduction of a multi-scale structure similar to that of multi-scale normalising flows. This can be accomplished by diffusing some parts of the input faster than others and stopping their diffusion earlier. Multi-scale structure can reduce the sampling time and provide hierarchical latent image decomposition.

\section{Acknowledgements} 
GB acknowledges the support from GSK and the Cantab Capital Institute for the Mathematics of Information. 
JS acknowledges the support from Aviva and the Cantab Capital Institute for the Mathematics of Information. 
CBS acknowledges support from the Philip Leverhulme Prize, the Royal Society Wolfson Fellowship, the EPSRC advanced career fellowship EP/V029428/1, EPSRC grants EP/S026045/1 and EP/T003553/1, EP/N014588/1, EP/T017961/1, the Wellcome Innovator Award RG98755, the Leverhulme Trust project Unveiling the invisible, the European Union Horizon 2020 research and innovation programme under the Marie Skodowska-Curie grant agreement No. 777826 NoMADS, the Cantab Capital Institute for the Mathematics of Information and the Alan Turing Institute. CE acknowledges support from
the Wellcome Innovator Award RG98755 for part of the work that was done at Cambridge.

\clearpage

{
\bibliography{bibliography}

\begin{thebibliography}{10}\itemsep=-1pt

\bibitem{anderson1982reverse_time_sde}
Brian~D.O. Anderson.
\newblock Reverse-time diffusion equation models.
\newblock {\em Stochastic Processes and their Applications}, 12(3):313--326,
  1982.

\bibitem{arridge2019ip}
Simon Arridge, Peter Maass, Ozan Öktem, and Carola-Bibiane Schönlieb.
\newblock Solving inverse problems using data-driven models.
\newblock {\em Acta Numerica}, 28:1–174, 2019.

\bibitem{bengio2005autoregressive}
Yoshua Bengio, R\'{e}jean Ducharme, Pascal Vincent, and Christian Janvin.
\newblock A neural probabilistic language model.
\newblock {\em J. Mach. Learn. Res.}, 3(null):1137–1155, Mar. 2003.

\bibitem{dhariwal2021diffusion_beats_gans}
Prafulla Dhariwal and Alex Nichol.
\newblock Diffusion models beat gans on image synthesis, 2021.

\bibitem{goodfellow2014generative}
Ian~J. Goodfellow, Jean Pouget-Abadie, Mehdi Mirza, Bing Xu, David
  Warde-Farley, Sherjil Ozair, Aaron Courville, and Yoshua Bengio.
\newblock Generative adversarial networks, 2014.

\bibitem{heusel2018fid}
Martin Heusel, Hubert Ramsauer, Thomas Unterthiner, Bernhard Nessler, and Sepp
  Hochreiter.
\newblock Gans trained by a two time-scale update rule converge to a local nash
  equilibrium, 2018.

\bibitem{ho2020denoising}
Jonathan Ho, Ajay Jain, and Pieter Abbeel.
\newblock Denoising diffusion probabilistic models, 2020.

\bibitem{hyvarinen2005score_original}
Aapo Hyv{{\"a}}rinen.
\newblock Estimation of non-normalized statistical models by score matching.
\newblock {\em Journal of Machine Learning Research}, 6(24):695--709, 2005.

\bibitem{isola2018pix2pix}
Phillip Isola, Jun-Yan Zhu, Tinghui Zhou, and Alexei~A. Efros.
\newblock Image-to-image translation with conditional adversarial networks,
  2018.

\bibitem{keyes1981bicubic}
R. Keys.
\newblock Cubic convolution interpolation for digital image processing.
\newblock {\em IEEE Transactions on Acoustics, Speech, and Signal Processing},
  29(6):1153--1160, 1981.

\bibitem{kingma2014autoencoding}
Diederik~P Kingma and Max Welling.
\newblock Auto-encoding variational bayes, 2014.

\bibitem{liang2021hrflow}
Jingyun Liang, Andreas Lugmayr, Kai Zhang, Martin Danelljan, Luc~Van Gool, and
  Radu Timofte.
\newblock Hierarchical conditional flow: A unified framework for image
  super-resolution and image rescaling, 2021.

\bibitem{2015celeba}
Ziwei Liu, Ping Luo, Xiaogang Wang, and Xiaoou Tang.
\newblock Deep learning face attributes in the wild.
\newblock In {\em Proceedings of International Conference on Computer Vision
  (ICCV)}, December 2015.

\bibitem{leonard2013properties}
Christian Léonard.
\newblock Some properties of path measures, 2013.

\bibitem{muller2012ip}
Jennifer~L. Mueller and Samuli Siltanen.
\newblock Linear and nonlinear inverse problems with practical applications.
\newblock In {\em Computational science and engineering}, 2012.

\bibitem{whitney1994estimation}
Whitney~K. Newey and Daniel McFadden.
\newblock Chapter 36 large sample estimation and hypothesis testing.
\newblock volume~4 of {\em Handbook of Econometrics}, pages 2111--2245.
  Elsevier, 1994.

\bibitem{oksendal2003sde}
Bernt Oksendal.
\newblock {\em Stochastic Differential Equations (5th Ed.): An Introduction
  with Applications}.
\newblock Springer-Verlag, Heidelberg, 2003.

\bibitem{papamakarios2021normalizing}
George Papamakarios, Eric Nalisnick, Danilo~Jimenez Rezende, Shakir Mohamed,
  and Balaji Lakshminarayanan.
\newblock Normalizing flows for probabilistic modeling and inference, 2021.

\bibitem{saharia2021sr3}
Chitwan Saharia, Jonathan Ho, William Chan, Tim Salimans, David~J. Fleet, and
  Mohammad Norouzi.
\newblock Image super-resolution via iterative refinement, 2021.

\bibitem{sohldickstein2015diffusion_original}
Jascha Sohl-Dickstein, Eric~A. Weiss, Niru Maheswaranathan, and Surya Ganguli.
\newblock Deep unsupervised learning using nonequilibrium thermodynamics, 2015.

\bibitem{song2021maximum}
Yang Song, Conor Durkan, Iain Murray, and Stefano Ermon.
\newblock Maximum likelihood training of score-based diffusion models, 2021.

\bibitem{song2020generative_score}
Yang Song and Stefano Ermon.
\newblock Generative modeling by estimating gradients of the data distribution,
  2020.

\bibitem{song2021sde}
Yang Song, Jascha Sohl-Dickstein, Diederik~P. Kingma, Abhishek Kumar, Stefano
  Ermon, and Ben Poole.
\newblock Score-based generative modeling through stochastic differential
  equations, 2021.

\bibitem{tashiro2021csdi}
Yusuke Tashiro, Jiaming Song, Yang Song, and Stefano Ermon.
\newblock Csdi: Conditional score-based diffusion models for probabilistic time
  series imputation, 2021.

\bibitem{vincent2011connection}
Pascal Vincent.
\newblock A connection between score matching and denoising autoencoders.
\newblock {\em Neural Computation}, 23(7):1661--1674, 2011.

\bibitem{zhou2004psnr+ssim}
Zhou Wang, A.C. Bovik, H.R. Sheikh, and E.P. Simoncelli.
\newblock Image quality assessment: from error visibility to structural
  similarity.
\newblock {\em IEEE Transactions on Image Processing}, 13(4):600--612, 2004.

\bibitem{xie2015edges}
Saining Xie and Zhuowen Tu.
\newblock Holistically-nested edge detection, 2015.

\bibitem{yu2014sketch2shoe}
A. Yu and K. Grauman.
\newblock Fine-grained visual comparisons with local learning.
\newblock In {\em Computer Vision and Pattern Recognition (CVPR)}, Jun 2014.

\bibitem{zhang2018lpips}
Richard Zhang, Phillip Isola, Alexei~A Efros, Eli Shechtman, and Oliver Wang.
\newblock The unreasonable effectiveness of deep features as a perceptual
  metric.
\newblock In {\em CVPR}, 2018.

\end{thebibliography}
\bibliographystyle{ieee_fullname}
}

\clearpage
\appendix

\section{Proofs}
\label{appendix:proofs}
\subsection{Equality of minimizers for CDE}
\label{appendix:minimizers}
\begin{lemma}
    \label{Vincent}
    For a fixed $y \in \mathbb{R}^d$ and $t \in \mathbb{R}$ we have
    \begin{align*}
        &\mathbb{E}_{\subalign{&x_0 \sim p(x_0 | y) \\ &x_t \sim p(x_t | x_0, y)}} 
            [\lambda(t) \norm{\nabla_{x_t} \ln{p(x_t | x_0, y)} - s_\theta(x_t, y, t)}_2^2]\\
        =& \mathbb{E}_{\subalign{&x_t \sim p(x_t |  y)}} 
            [\lambda(t) \norm{\nabla_{x_t} \ln{p(x_t | y)} - s_\theta(x_t, y, t)}_2^2]
    \end{align*}
\begin{proof}
    Since $y$ and $t$ are fixed, we may define $\psi(x_t) := s_\theta(x_t, y, t)$, $q(x_0) := p(x_0 | y)$ and $q(x_t | x_0) = p(x_t | x_0, y)$.
    Therefore, by the Tower Law, the statement of the lemma is equivalent to
    \begin{align*}
        &\mathbb{E}_{\subalign{&x_0, x_t \sim q(x_0, x_t)}} 
        [\norm{\nabla_{x_t} \ln{q(x_t | x_0)} - \psi(x_t)}_2^2]\\
    =& \mathbb{E}_{\subalign{&x_t \sim q(x_t)}} 
        [\norm{\nabla_{x_t} \ln{q(x_t)} - \psi(x_t)}_2^2]
    \end{align*}
    Which follows directly from \cite[Eq. 11]{vincent2011connection}.
\end{proof}
    
\end{lemma}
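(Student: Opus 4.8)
The plan is to reduce the claimed fixed-$(t,y)$ identity to the ordinary (unconditional) denoising score-matching identity of \cite{vincent2011connection}, applied not to the raw distributions but to their conditionals given $y$. Since both $t$ and $y$ are held fixed throughout, I would first pull the positive scalar $\lambda(t)$ out of both expectations, where it cancels, so that it suffices to establish the unweighted equality. I would then introduce the shorthand $\psi(x_t) := s_\theta(x_t, y, t)$ together with the conditional measures $q(x_0) := p(x_0 \mid y)$ and $q(x_t \mid x_0) := p(x_t \mid x_0, y)$, whose induced marginal is $q(x_t) = \int q(x_t \mid x_0)\, q(x_0)\, dx_0 = p(x_t \mid y)$. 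In this notation the left-hand side is exactly the \emph{denoising} objective $\mathbb{E}_{q(x_0,x_t)}[\norm{\nabla_{x_t}\ln q(x_t \mid x_0) - \psi(x_t)}_2^2]$ and the right-hand side is the \emph{explicit} objective $\mathbb{E}_{q(x_t)}[\norm{\nabla_{x_t}\ln q(x_t) - \psi(x_t)}_2^2]$.

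The second step is the Tower Law: writing the nested expectation over $x_0 \sim q(x_0)$ and $x_t \sim q(x_t \mid x_0)$ as a single expectation over the joint $q(x_0, x_t) = q(x_0)\,q(x_t \mid x_0)$ recasts the left-hand side as a genuine expectation under the joint law. The claim then becomes precisely the assertion that the denoising and explicit score-matching objectives for the density $q$ carry the same $\psi$-dependence, which is the content of \cite[Eq.~11]{vincent2011connection} and can be invoked directly once the notation above is in place.

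The analytic heart — and the step I expect to be the main obstacle — is the cross-term identity underlying Vincent's equation. Expanding both squared norms, the pure-$\psi$ terms $\mathbb{E}[\norm{\psi(x_t)}_2^2]$ coincide because both are taken under the same marginal $x_t \sim q(x_t)$, while the linear cross-terms coincide by the integration-by-parts / differentiate-under-the-integral computation
\begin{gather*}
\mathbb{E}_{q(x_0,x_t)}[\langle \psi(x_t), \nabla_{x_t}\ln q(x_t \mid x_0)\rangle]
= \int \langle \psi(x_t), \nabla_{x_t} q(x_t)\rangle\, dx_t \\
= \mathbb{E}_{q(x_t)}[\langle \psi(x_t), \nabla_{x_t}\ln q(x_t)\rangle],
\end{gather*}
where the middle equality uses $\int q(x_0)\,\nabla_{x_t} q(x_t \mid x_0)\, dx_0 = \nabla_{x_t} q(x_t)$. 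This exchange of $\nabla_{x_t}$ with the $x_0$-marginalization is where the regularity is consumed: $q(x_t \mid x_0)$ must be continuously differentiable in $x_t$, the dominated-convergence hypotheses for differentiating under the integral must hold, and boundary terms must vanish — exactly the smoothness and decay properties enjoyed by the Gaussian perturbation kernels used here, so that Vincent's hypotheses are met after conditioning on $y$. The only surviving difference between the two objectives is the score-independent residue $\tfrac{1}{2}\big(\mathbb{E}_{q(x_0,x_t)}\norm{\nabla_{x_t}\ln q(x_t \mid x_0)}_2^2 - \mathbb{E}_{q(x_t)}\norm{\nabla_{x_t}\ln q(x_t)}_2^2\big)$, which is constant in $\psi$; I would note explicitly that the asserted equality is to be read up to this $\theta$-independent constant, which is all that Theorem~\ref{thm:CDE_consistency} requires downstream.
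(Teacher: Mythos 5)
Your argument is essentially the paper's own: fix $(t,y)$, drop the positive factor $\lambda(t)$, absorb the conditioning on $y$ by setting $\psi(x_t):=s_\theta(x_t,y,t)$, $q(x_0):=p(x_0\mid y)$, $q(x_t\mid x_0):=p(x_t\mid x_0,y)$ (so that $q(x_t)=p(x_t\mid y)$), collapse the nested expectation into one over the joint $q(x_0,x_t)$ by the Tower Law, and invoke the denoising/explicit score-matching equivalence of \cite[Eq.~11]{vincent2011connection}; your cross-term computation via $\int q(x_0)\,\nabla_{x_t}q(x_t\mid x_0)\,dx_0=\nabla_{x_t}q(x_t)$ is exactly the step the paper delegates to Vincent (note that it uses only differentiation under the integral sign and Fubini, not integration by parts --- vanishing boundary terms matter for Hyv\"arinen's implicit score matching, not for the ESM--DSM step). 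The one place you depart from the paper is also the one place you are more accurate: Vincent's identity gives equality of the two objectives only up to an additive constant independent of $\psi$, namely the difference of the pure score-norm terms, and that constant is generically nonzero --- e.g.\ in one dimension with $x_0\sim\mathcal{N}(0,1)$, $x_t=x_0+\sigma z$ and $\psi\equiv 0$, the left-hand side of Lemma \ref{Vincent} equals $\lambda(t)/\sigma^2$ while the right-hand side equals $\lambda(t)/(1+\sigma^2)$. So the exact equality displayed in the lemma (and implicitly asserted by the paper's proof) is an overstatement, and your closing caveat that the identity is to be read up to a $\theta$-independent constant is the correct reading; since Theorem \ref{thm:CDE_consistency} and the consistency corollary only use equality of minimizers, nothing downstream is affected.
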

\begin{customthm}{1}
    The minimizer of
    \begin{gather*}
    \begin{aligned}
            &\frac{1}{2} \mathbb{E}_{\subalign{&t \sim U(0,T)\\ &x_0, y \sim p(x_0, y) \\ &x_t \sim p(x_t | x_0)}} 
            [\lambda(t) \norm{\nabla_{x_t} \ln{p(x_t | x_0)} - s_\theta(x_t, y, t)}_2^2]
    \end{aligned}
    \end{gather*}    
    in $\theta$ is the same as the minimizer of 
    \begin{gather*}
         \frac{1}{2} \mathbb{E}_{\subalign{&t \sim U(0,T)\\ &x_t, y \sim p(x_t, y)}} 
        [\lambda(t) \norm{\nabla_{x_t} \ln{p(x_t | y)} - s_\theta(x_t, y,t)}_2^2].
    \end{gather*}
\end{customthm}
\begin{proof}
    First, notice that $x_t$ is conditionally independent of $y$ given $x_0$. Therefore, by applying the Tower Law we obtain
    \begin{align*}
        &\mathbb{E}_{\subalign{&t \sim U(0,T)\\ &x_0, y \sim p(x_0, y) \\ &x_t \sim p(x_t | x_0)}} 
            [\lambda(t) \norm{\nabla_{x_t} \ln{p(x_t | x_0)} - s_\theta(x_t, y, t)}_2^2] \\
            \overset{(1)}{=}& \mathbb{E}_{\subalign{&t \sim U(0,T)\\ &y \sim p(y)\\&x_0 \sim p(x_0 | y) \\ &x_t \sim p(x_t | x_0)}} 
            [\lambda(t) \norm{\nabla_{x_t} \ln{p(x_t | x_0)} - s_\theta(x_t, y, t)}_2^2] \\
            \overset{(2)}{=}& \mathbb{E}_{\subalign{&t \sim U(0,T)\\ &y \sim p(y)\\&x_0 \sim p(x_0 | y) \\ &x_t \sim p(x_t | x_0, y)}} 
            [\lambda(t) \norm{\nabla_{x_t} \ln{p(x_t | x_0, y)} - s_\theta(x_t, y, t)}_2^2] \\ 
            =& \mathbb{E}_{\subalign{&t \sim U(0,T)\\ &y \sim p(y)}} 
            [f(t,y)] =: (*)
    \end{align*}
    where 
    \begin{align*}
        f&(t,y) := \\
        &\mathbb{E}_{\subalign{&x_0 \sim p(x_0 | y) \\ &x_t \sim p(x_t | x_0, y)}} [
        \lambda(t) \norm{\nabla_{x_t} \ln{p(x_t | x_0, y)} - s_\theta(x_t, y, t)}_2^2
    ].
    \end{align*}
    Now fix $y$ and $t$. By Lemma \ref{Vincent}, it follows that
    \begin{align*}
        &f(t,y) \\
        =&\mathbb{E}_{\subalign{&x_0 \sim p(x_0 | y) \\ &x_t \sim p(x_t | x_0, y)}} 
            [\lambda(t) \norm{\nabla_{x_t} \ln{p(x_t | x_0, y)} - s_\theta(x_t, y, t)}_2^2]\\
        \overset{(3)}{=}& \mathbb{E}_{\subalign{&x_t \sim p(x_t |  y)}} 
            [\lambda(t) \norm{\nabla_{x_t} \ln{p(x_t | y)} - s_\theta(x_t, y, t)}_2^2]\\
    \end{align*}
    Since $t$ and $y$ were arbitrary, this is true for all $t$ and $y$. Therefore, substituting back into $(*)$ we get that
    \begin{align*}
        (*) &= \mathbb{E}_{\subalign{&t \sim U(0,T)\\ &y \sim p(y)\\&x_t \sim p(x_t | y)}} 
            [\lambda(t) \norm{\nabla_{x_t} \ln{p(x_t | y)} - s_\theta(x_t, y, t)}_2^2]\\
        &\overset{(1)}{=} \mathbb{E}_{\subalign{&t \sim U(0,T)\\ &x_t, y \sim p(x_t, y)}} 
        [\lambda(t) \norm{\nabla_{x_t} \ln{p(x_t | y)} - s_\theta(x_t, y,t)}_2^2].
    \end{align*}
(1) Tower Law, (2) Conditional independence of $x_t$ and $y$ given $x_0$, (3) \text{Lemma} \ref{Vincent}.
\end{proof}

\subsection{Consistency of CDE}
\label{appendix:consistency}
In order to prove the consistency, in this subsection we make the following assumptions:
\begin{assumption}
    \label{assum:compact_1}
    The space of parameters $\Theta$ and the data space $\mathcal{X}$ are compact.
\end{assumption}
\begin{assumption}
    \label{assum:unique}
    There exists a unique $\theta^\ast \in \Theta$ such that $s_{\theta^\ast}(x,y,t) = \nabla_{x_t}\ln p (x,y,t)$.
\end{assumption}

First we state some technical, but well-known lemmas, which will be useful in proving our consistency result.

\begin{lemma}[Uniform law of large numbers] \cite[Lemma 2.4]{whitney1994estimation} \\
    \label{lemma:ULLN}
    Let $z_i$ be i.i.d from a distribution $q(z)$ and suppose that:
    \begin{itemize}
        \item $\Theta$ is compact.
        \item $f(z,\theta)$ is continuous for all $\theta \in \Theta$ and almost all $z$.
        \item $f(\cdot, \theta)$ is a measurable function of $z$ for each $\theta$.
        \item There exists $d: \mathcal{Z} \xrightarrow{} \mathbb{R}$ such that $\mathbb{E}[d(z)]<\infty$ and $\norm{f(z,\theta)} \leq d(z)$ for each $\theta$.
    \end{itemize}
    Then $\mathbb{E}_{z}[f(z,\theta)]$ is continuous in $\theta$, and $\frac{1}{n}\sum_{i=1}^n f(z_i, \theta)$ converges to $\mathbb{E}_{z}[f(z,\theta)]$ uniformly in probability, i.e.:
    \begin{gather*}
        \sup_\theta \norm{\frac{1}{n}\sum_{i=1}^n f(z_i, \theta) - \mathbb{E}_{z}[f(z,\theta)]} \overset{P}{\to} 0
    \end{gather*}

\end{lemma}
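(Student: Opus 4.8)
The statement is the classical uniform law of large numbers, so the plan is to combine the Dominated Convergence Theorem (to obtain continuity and to control local oscillations) with a finite-covering argument that upgrades the ordinary pointwise law of large numbers to a uniform one. Write $Q(\theta) := \mathbb{E}_z[f(z,\theta)]$ and $Q_n(\theta) := \frac{1}{n}\sum_{i=1}^n f(z_i,\theta)$, so that the two claims amount to continuity of $Q$ and $\sup_\theta \norm{Q_n(\theta) - Q(\theta)} \overset{P}{\to} 0$.

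First I would establish continuity of $Q$. Fix $\theta$ and take $\theta_k \to \theta$. By the assumed continuity of $f(z,\cdot)$ we have $f(z,\theta_k) \to f(z,\theta)$ for almost every $z$, and the domination $\norm{f(z,\theta_k)} \le d(z)$ with $\mathbb{E}[d(z)] < \infty$ licenses Dominated Convergence, giving $Q(\theta_k) \to Q(\theta)$. Hence $Q$ is continuous on the compact set $\Theta$.

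The heart of the argument is the local oscillation. For $\theta \in \Theta$ and $\rho > 0$ define
\[
\Delta(z,\theta,\rho) := \sup_{\norm{\theta'-\theta}<\rho} \norm{f(z,\theta') - f(z,\theta)}.
\]
Continuity of $f(z,\cdot)$ forces $\Delta(z,\theta,\rho) \downarrow 0$ as $\rho \downarrow 0$ for a.e.\ $z$, while $\Delta(z,\theta,\rho) \le 2d(z)$ is integrable, so Dominated Convergence again yields $\mathbb{E}[\Delta(z,\theta,\rho)] \to 0$ as $\rho \downarrow 0$. Given $\epsilon > 0$, for each $\theta$ choose $\rho_\theta$ with $\mathbb{E}[\Delta(z,\theta,\rho_\theta)] < \epsilon$. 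The balls $\{B(\theta,\rho_\theta)\}_{\theta \in \Theta}$ cover the compact $\Theta$, so I extract a finite subcover centred at $\theta_1,\dots,\theta_m$. For an arbitrary $\theta$, lying in some $B(\theta_j,\rho_{\theta_j})$, the triangle inequality and the definition of $\Delta$ give
\[
\norm{Q_n(\theta) - Q(\theta)} \le \frac{1}{n}\sum_{i=1}^n \Delta(z_i,\theta_j,\rho_{\theta_j}) + \norm{Q_n(\theta_j) - Q(\theta_j)} + \mathbb{E}[\Delta(z,\theta_j,\rho_{\theta_j})],
\]
where the last term is already below $\epsilon$ and the first two involve only the finitely many indices $j$.

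To finish, I would apply the ordinary law of large numbers simultaneously to the finitely many averages $Q_n(\theta_j)$ and $\frac{1}{n}\sum_i \Delta(z_i,\theta_j,\rho_{\theta_j})$: with probability tending to one, every $\norm{Q_n(\theta_j)-Q(\theta_j)} < \epsilon$ and every $\frac{1}{n}\sum_i \Delta(z_i,\theta_j,\rho_{\theta_j}) < 2\epsilon$ (since its limit $\mathbb{E}[\Delta] < \epsilon$). Taking the supremum over the finitely many cells then bounds $\sup_\theta \norm{Q_n(\theta)-Q(\theta)}$ by $4\epsilon$ with high probability, and since $\epsilon$ was arbitrary this is precisely uniform convergence in probability. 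The main obstacle---and the place where the stated hypotheses (measurability of $f(\cdot,\theta)$ together with separability of the compact $\Theta$) are really used---is justifying that $\Delta(z,\theta,\rho)$ is itself measurable in $z$ despite being a supremum over uncountably many $\theta'$; this is handled by invoking continuity of $f(z,\cdot)$ to replace the supremum over the open ball with a supremum over a countable dense subset, after which the Dominated Convergence step applies legitimately.
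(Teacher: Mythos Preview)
Your proof is correct and is in fact the standard argument for the uniform law of large numbers. However, there is nothing to compare against: the paper does not prove this lemma at all. It is stated purely as a citation of \cite[Lemma 2.4]{whitney1994estimation} (Newey and McFadden's handbook chapter) and then invoked as a black box in the proof of Corollary~1. So while your Dominated-Convergence-plus-finite-cover argument is exactly what one finds in that reference, the paper itself simply quotes the result without reproducing the proof.
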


\begin{lemma}[Consistency of extremum estimators] \cite[Theorem 2.1]{whitney1994estimation} \\
    \label{lemma:consistency}
    Let $\Theta$ be compact and consider a family of functions $\mathcal{L}^{(n)}: \Theta \to \mathbb{R}$. Moreover, suppose there exists a function $\mathcal{L}: \Theta \to \mathbb{R}$ such that
    \begin{itemize}
        \item $\mathcal{L}(\theta)$ is uniquely minimized at $\theta^\ast$.
        \item $\mathcal{L}(\theta)$ is continuous.
        \item $\mathcal{L}^{(n)}(\theta)$ converges uniformly in probability to $\mathcal{L}(\theta)$.
    \end{itemize}
    Then $$\theta^\ast_n := \argmin_{\theta \in \Theta} \mathcal{L}^{(n)}(\theta) \overset{P}{\to} \theta^\ast.$$
\end{lemma}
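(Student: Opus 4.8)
The plan is to prove consistency by the classical contradiction argument that converts the identification gap into a high-probability statement, using all three hypotheses together. Since $\Theta$ is a metric space, it suffices to fix an arbitrary open neighborhood $\mathcal{N}$ of $\theta^\ast$ and show that $P(\theta^\ast_n \in \mathcal{N}) \to 1$, which is exactly the meaning of $\theta^\ast_n \overset{P}{\to} \theta^\ast$. First I would exploit compactness: the complement $K := \Theta \setminus \mathcal{N}$ is closed in the compact space $\Theta$, hence compact, so the continuous function $\mathcal{L}$ attains its infimum on $K$. Because $\theta^\ast$ is the \emph{unique} minimizer of $\mathcal{L}$ and $\theta^\ast \notin K$, this infimum is strictly larger than $\mathcal{L}(\theta^\ast)$, and I would define the identification gap
\[
    \delta := \inf_{\theta \in K} \mathcal{L}(\theta) - \mathcal{L}(\theta^\ast) > 0
\]
(the case $K = \emptyset$, i.e. $\mathcal{N} = \Theta$, being trivial).

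Next I would introduce the good event $A_n := \{\, \sup_{\theta \in \Theta} |\mathcal{L}^{(n)}(\theta) - \mathcal{L}(\theta)| < \delta/3 \,\}$, whose probability tends to $1$ by the assumed uniform convergence in probability of $\mathcal{L}^{(n)}$ to $\mathcal{L}$. On $A_n$ I would chain two opposing bounds. Since $\theta^\ast_n$ minimizes $\mathcal{L}^{(n)}$ and $\mathcal{L}^{(n)}$ is within $\delta/3$ of $\mathcal{L}$ at $\theta^\ast$, we get $\mathcal{L}^{(n)}(\theta^\ast_n) \le \mathcal{L}^{(n)}(\theta^\ast) < \mathcal{L}(\theta^\ast) + \delta/3$. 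Conversely, supposing for contradiction that $\theta^\ast_n \in K$, the definition of $\delta$ gives $\mathcal{L}(\theta^\ast_n) \ge \mathcal{L}(\theta^\ast) + \delta$, and uniform closeness at $\theta^\ast_n$ then yields $\mathcal{L}^{(n)}(\theta^\ast_n) > \mathcal{L}(\theta^\ast_n) - \delta/3 \ge \mathcal{L}(\theta^\ast) + 2\delta/3$. Combining the two displays forces $\mathcal{L}(\theta^\ast) + 2\delta/3 < \mathcal{L}(\theta^\ast) + \delta/3$, a contradiction, so on $A_n$ we must have $\theta^\ast_n \in \mathcal{N}$. Therefore $P(\theta^\ast_n \in \mathcal{N}) \ge P(A_n) \to 1$, completing the argument.

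The main obstacle — indeed the only non-mechanical step — is establishing the strictly positive gap $\delta$: this is the single point where compactness, continuity, and unique identification are all simultaneously required, and it is precisely what would break down if the minimizer were not unique (the infimum over $K$ could equal $\mathcal{L}(\theta^\ast)$) or if $\Theta$ were non-compact (the infimum over $K$ need not be attained, allowing it to approach $\mathcal{L}(\theta^\ast)$). A secondary technical point I would flag is the well-definedness of $\theta^\ast_n = \argmin_{\theta} \mathcal{L}^{(n)}(\theta)$: existence of a minimizer follows from continuity of $\mathcal{L}^{(n)}$ on the compact $\Theta$, while its measurability (needed to make the probability statements rigorous) I would either assume via a measurable selection theorem or handle through outer probability, which is standard in this setting and does not affect the core argument.
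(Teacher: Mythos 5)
Your proof is correct, and it matches the paper's treatment in the only sense available: the paper does not prove this lemma but imports it directly from Newey--McFadden \cite[Theorem 2.1]{whitney1994estimation}, whose proof is exactly your identification-gap-plus-uniform-convergence argument (stated there for maximization rather than minimization). Your flagged caveats --- strict positivity of $\delta$ via compactness of $K = \Theta \setminus \mathcal{N}$ and attainment of the infimum, and existence/measurability of $\theta^\ast_n$ --- are the right technical points and are handled the same standard way in the cited source.
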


\begin{customcoll}{1}
    Let $\theta_n^\ast$ be a minimizer of a $n$-sample Monte Carlo approximation of \begin{gather*}
        \begin{aligned}
                \frac{1}{2} \mathbb{E}_{\subalign{&t \sim U(0,T)\\ &x_0, y \sim p(x_0, y) \\ &x_t \sim p(x_t | x_0)}} 
                [\lambda(t) \norm{\nabla_{x_t} \ln{p(x_t | x_0)} - s_\theta(x_t, y, t)}_2^2].
        \end{aligned}
        \end{gather*} 
        Then under assumptions \ref{assum:compact_1} and \ref{assum:unique}, the conditional denoising estimator $s_{\theta_n^\ast}(x,y,t)$ is a consistent estimator of the conditional score $\nabla_{x_t} \ln p(x_t | y)$, i.e.
    \begin{gather*}
        s_{\theta_n^\ast}(x,y,t) \overset{P}{\to} \nabla_{x_t} \ln p(x_t | y),
    \end{gather*}
    as the number of Monte Carlo samples $n$ approaches infinity.
\end{customcoll}

\begin{proof}
    By conditional independence and the Tower Law, we get
    \begin{align*}
         &\mathbb{E}_{\subalign{&t \sim U(0,T)\\ &x_0, y \sim p(x_0, y) \\ &x_t \sim p(x_t | x_0)}} 
                [\lambda(t) \norm{\nabla_{x_t} \ln{p(x_t | x_0)} - s_\theta(x_t, y, t)}_2^2] \\
        =& \mathbb{E}_{\subalign{&t \sim U(0,T)\\ &x_0, y \sim p(x_0, y) \\ &x_t \sim p(x_t | x_0, y)}} 
            [\lambda(t) \norm{\nabla_{x_t} \ln{p(x_t | x_0)} - s_\theta(x_t, y, t)}_2^2] \\
        =& \mathbb{E}_{\subalign{&t \sim U(0,T)\\ &x_0,, x_t, y \sim p(x_0, x_t ,y)}} 
            [\lambda(t) \norm{\nabla_{x_t} \ln{p(x_t | x_0)} - s_\theta(x_t, y, t)}_2^2].
    \end{align*}
    Let $z=(t,x_0,x_t,y)$ and denote by $q(z):=p(t,x_0,x_t,y)$ the joint distribution. Moreover, define $f(z,\theta) := \lambda(t) \norm{\nabla_{x_t} \ln{p(x_t | x_0)} - s_\theta(x_t, y, t)}_2^2$. Since $t \sim U(0,T)$ is independent of $(x_0, x_t ,y) \sim p(x_0, x_t ,y)$, the above is equal to 
    \begin{align*}
        \mathbb{E}_{z \sim q(z)} 
            [f(z,\theta)]
    \end{align*}
    Therefore by Lemma \ref{lemma:ULLN}, the Monte Carlo approximation of \ref{CDN}: $\mathcal{L}^{(n)}(\theta)=\frac{1}{n}\sum_{i=1}^n f(z_i, \theta)$ converges uniformly in probability to $\mathcal{L}(\theta) = \mathbb{E}_{z \sim q(z)} 
    [f(z,\theta)]$. Let $\theta^\ast$ be the minimizer of $\mathcal{L}(\theta)$, by Lemma \ref{lemma:consistency} we get that $\theta^\ast_n \overset{P}{\to} \theta^\ast$ . Finally by Theorem \ref{thm:CDE_consistency}, $\theta^\ast$ is also a minimizer of the Fisher divergence between $s_{\theta^\ast}(x_t,y,t)$ and $\nabla_{x_t} \ln p(x_t | y)$ and by Assumption \ref{assum:unique} this implies that $s_{\theta^\ast}(x_t,y,t) = \nabla_{x_t} \ln p(x_t | y)$. Hence $s_{\theta_n^\ast}(x,y,t) \overset{P}{\to} \nabla_{x_t} \ln p(x_t | y)$.
\end{proof}

\subsection{Likelihood weighting for multi-speed and multi-sde models}
\label{appendix:weighting}
In this section we derive the likelihood weighting for multi-sde models (Theorem \ref{thm:weightning}). First using the framework in \cite[Appendix A]{song2021sde} we present the Anderson's theorem for multi-dimensional SDEs with non-homogeneous covariance matrix (without assuming $\Sigma(t) \not = \sigma(t) I$) and generalize the main result of \cite{song2021maximum} to this setting. Then, we cast the problem of multi-speed and  multi-sde diffusion as a special case of multi-dimensional diffusion with a particular covariance matrix $\Sigma(t)$ and thus obtain the likelihood weighting for multi-sde models (Theorem \ref{thm:weightning}).

Consider an Ito's SDE
\begin{align*}
    dx = \mu(x, t)dt + \Sigma(t)dw
\end{align*}
where $\mu: \mathbb{R}^{n_x} \times [0,T] \xrightarrow{} \mathbb{R}^{n_x}$ and $\Sigma: [0,T] \xrightarrow{} \mathbb{R}^{n_x \times n_x}$ is a time-dependent positive-definite matrix. By multi-dimensional Anderson's Theorem \cite{anderson1982reverse_time_sde} the corresponding reverse time SDE is given by 
\begin{align}
    \label{eq:true_rtsde}
    dx &= \tilde{\mu}(x, t)dt + \Sigma(t)dw \\
    \text{where } \tilde{\mu}(x,t) &:= \mu(x,t) -  \Sigma(t)^2 \nabla_x \ln p_{X_t}(x). \nonumber
\end{align}

If we train a score-based diffusion model to approximate $\nabla_x \ln p_{X_t}(x)$ with a neural network $s_\theta(x,t)$ we will obtain the following approximate reverse-time sde
\begin{align}
    \label{eq:approx_rtsde}
    dx &= \tilde{\mu}_\theta(x, t)dt + \Sigma(t)dw \\
    \text{where } \tilde{\mu}_\theta(x,t) &:= \mu(x,t) -  \Sigma(t)^2 s_\theta(x,t) \nonumber
\end{align}

Now we generalize \cite[Theorem 1]{song2021maximum} to multi-dimensional setting.
\begin{theorem}
    Let $p(x_t)$ and $p_\theta(x_t)$ denote marginal distributions of \ref{eq:true_rtsde} and \ref{eq:approx_rtsde} respectively. Then under regularity assumptions of  \cite[Theorem 1]{song2021maximum} we have that 
    \label{thm:multi-dim}
    \begin{align*}
        KL(p(x_0) | p_\theta(x_0)) \leq &KL(p(x_T) | \pi(x_T)) 
        \\ &+ \frac{1}{2} \mathbb{E}_{\subalign{&t \sim U(0,T)\\ &x_t \sim p(x_t)}} 
        [
            v^T \Sigma(t)^2 v
        ],
    \end{align*}
where $v=\nabla_{x_t} \ln{p(x_t)} - s_\theta(x_t,t)$. 
\end{theorem}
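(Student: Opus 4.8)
The plan is to follow the argument of \cite[Theorem 1]{song2021maximum}, replacing the scalar diffusion coefficient $\sigma(t)$ by the matrix $\Sigma(t)$ throughout, and to carry out the whole proof at the level of path measures on $C([0,T],\mathbb{R}^{n_x})$. First I would introduce two such measures: let $P$ be the law of the process solving the true reverse-time SDE (\ref{eq:true_rtsde}) initialised at $x_T \sim p(x_T)$, and let $Q_\theta$ be the law of the process solving the approximate reverse-time SDE (\ref{eq:approx_rtsde}) initialised at $x_T \sim \pi(x_T)$. By the multi-dimensional Anderson theorem \cite{anderson1982reverse_time_sde}, $P$ coincides with the time reversal of the forward diffusion started at $p(x_0)$, so its time-$t$ marginal equals the forward marginal $p(x_t)$ and in particular its time-$0$ marginal is $p(x_0)$; by construction the time-$0$ marginal of $Q_\theta$ is $p_\theta(x_0)$.

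Next I would chain two standard inequalities. Since the time-$0$ marginals are pushforwards of $P$ and $Q_\theta$ under the coordinate projection $\omega \mapsto \omega_0$, the data-processing inequality gives $KL(p(x_0)\,\|\,p_\theta(x_0)) \le KL(P\,\|\,Q_\theta)$. Disintegrating each path measure over its terminal coordinate $x_T$ and applying the chain rule for relative entropy then yields
\begin{equation*}
KL(P\,\|\,Q_\theta) = KL(p(x_T)\,\|\,\pi(x_T)) + \mathbb{E}_{x_T \sim p(x_T)}\!\left[KL(P^{x_T}\,\|\,Q_\theta^{x_T})\right],
\end{equation*}
where $P^{x_T}$ and $Q_\theta^{x_T}$ are the conditional path laws given the terminal value $x_T$.

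The core computation is the conditional term. Given $x_T$, the measures $P^{x_T}$ and $Q_\theta^{x_T}$ are laws of diffusions with a common diffusion matrix $\Sigma(t)$ and drifts $\tilde\mu$ and $\tilde\mu_\theta$, so Girsanov's theorem gives
\begin{equation*}
KL(P^{x_T}\,\|\,Q_\theta^{x_T}) = \frac{1}{2}\,\mathbb{E}_{P^{x_T}}\!\left[\int_0^T (\tilde\mu - \tilde\mu_\theta)^T (\Sigma\Sigma^T)^{-1} (\tilde\mu - \tilde\mu_\theta)\,dt\right].
\end{equation*}
From the definitions, $\tilde\mu(x,t) - \tilde\mu_\theta(x,t) = -\Sigma(t)^2 v$ with $v = \nabla_{x_t}\ln p(x_t) - s_\theta(x_t,t)$; substituting and using that $\Sigma(t)$ is symmetric positive-definite (hence $(\Sigma\Sigma^T)^{-1} = \Sigma^{-2}$) collapses the quadratic form to $v^T \Sigma(t)^2 v$. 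Taking the outer expectation over $x_T \sim p(x_T)$, applying Fubini, and recalling that the time-$t$ marginal of $P$ is $p(x_t)$, the conditional term becomes $\tfrac12\int_0^T \mathbb{E}_{x_t \sim p(x_t)}[v^T \Sigma(t)^2 v]\,dt$; re-expressing the time integral as an expectation over $t \sim U(0,T)$ gives the stated bound.

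The hard part will be the rigorous use of Girsanov in the matrix-diffusion setting: one must know that $\Sigma(t)$ is invertible — which is exactly where positive-definiteness enters — so that the change-of-measure drift $\Sigma^{-1}(\tilde\mu - \tilde\mu_\theta)$ is well-defined, and one must verify an integrability (Novikov-type) condition ensuring the Radon--Nikodym density between $P^{x_T}$ and $Q_\theta^{x_T}$ is a genuine martingale rather than merely a supermartingale. These are precisely the regularity hypotheses imported from \cite[Theorem 1]{song2021maximum}, so under those assumptions they require no separate work. A secondary point I would state carefully is that the multi-dimensional Anderson theorem genuinely reproduces the forward marginals, since this is what licenses identifying the time-$0$ marginal of $P$ with $p(x_0)$ and the time-$t$ marginals with $p(x_t)$.
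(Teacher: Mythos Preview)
Your proposal is correct and follows essentially the same route as the paper: bound the marginal KL by the path-space KL (you phrase this as data-processing, the paper as chain rule plus nonnegativity of the conditional term), disintegrate over the terminal value $x_T$ via the chain rule, and compute the conditional path KL by Girsanov, using symmetry and invertibility of $\Sigma(t)$ to reduce the quadratic form to $v^T\Sigma(t)^2 v$. The only cosmetic difference is that the paper writes out the log Radon--Nikodym density explicitly (stochastic integral plus quadratic variation) and then kills the stochastic integral by the martingale property, whereas you invoke the resulting KL formula directly.
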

\begin{proof}   
    We proceed in close analogy to the proof of \cite[Theorem 1]{song2021maximum} but we use a more general diffusion matrix $\Sigma(t)$. 
    Let $P$ be the law of the true reverse-time sde and let $P_\theta$ be the law of the approximate reverse-time sde. 
    Then by  \cite[Theorem 2.4]{leonard2013properties} (generalized chain rule for KL divergence) we have
    \begin{align*}
        KL(P | P_\theta) = &KL(p(x_0) | p_\theta(x_0)) 
        \\ &+  \mathbb{E}_{z \sim p(x_0)}[KL(P(\cdot | x_0=z) |P_\theta(\cdot | x_0=z))].
    \end{align*}
    Since $\mathbb{E}_{z \sim p(x_0)}[KL(P(\cdot | x_0=z) |P_\theta(\cdot | x_0=z))] \geq 0$, this implies 
    \begin{align*}
       KL(p(x_0) | p_\theta(x_0)) \leq KL(P | P_\theta) 
    \end{align*}
    Using the fact that $p_\theta(x_T) = \pi$ and applying \cite[Theorem 2.4]{leonard2013properties} again, we obtain
    \begin{align*}
        KL(P | P_\theta) =  &KL(p(x_T) |\pi) 
        \\ &+  \mathbb{E}_{\subalign{z \sim p(x_T)}}[KL(P(\cdot | x_T=z) |P_\theta(\cdot | x_T=z))].
    \end{align*}
    Let $P^z := P(\cdot | x_T=z)$ and $P_\theta^z := P_\theta(\cdot | x_T=z)$
    \begin{align*}
        \mathbb{E}_{\subalign{z \sim p(x_T)}}[KL(P(\cdot | x_T=z) |P_\theta(\cdot | x_T=z))]
        \\ =  - \mathbb{E}_{\subalign{z \sim p(x_T)}} \left[ 
            \mathbb{E}_{P^z} \left[
                \ln \frac{d P^z_\theta}{d P^z}
            \right]
        \right]
    \end{align*}
    Using Girsanov Theorem \cite[Theorem 8.6.5]{oksendal2003sde} and the fact that $\Sigma(t)$ is symmetric and invertible 
    \begin{align*}
     =  &\mathbb{E}_{z \sim p(x_T)}  \bigg[ 
            \mathbb{E}_{P^z} \bigg[\\
                    &\int_0^T  \Sigma(t) v(x_t,t) dw_t 
                    + \frac{1}{2} \int_0^T v(x_t,t)^T \Sigma(t)^2 v(x_t,t) dt 
            \bigg]
        \bigg]
    \end{align*}
    where $v(x_t,t)=\nabla_{x_t} \ln{p(x_t)} - s_\theta(x_t,t)$. Since $\int_0^T  \Sigma(t) v(x_t,t) dw_t $ is a martingale (Ito's integral wrt Brownian motion) 
    \begin{align*}
        &=  \frac{1}{2} \mathbb{E}_{z \sim p(x_T)}  \bigg[ 
            \mathbb{E}_{P^z} \bigg[
                     \int_0^T v(x_t,t)^T \Sigma(t)^2 v(x_t,t) dt 
            \bigg]
        \bigg] \\
        &= \frac{1}{2} \int_0^T \mathbb{E}_{x \sim p(x_t)}[ v(x_t,t)^T \Sigma(t)^2 v(x_t,t)] \\
        &= \frac{1}{2} \mathbb{E}_{\subalign{&t \sim U(0,T)\\ &x_t \sim p(x_t)}} 
        [
            v(x_t,t)^T \Sigma(t)^2 v(x_t,t)
        ].
    \end{align*}
\end{proof}

\subsubsection{Multi-sde and multi-speed diffusion}
Now we consider again the multi-speed and the more general multi-sde diffusion frameworks from sections \ref{sec:multi-scale diffusion} and \ref{sec:conditional generation}. Suppose that we have two tensors $x$ and $y$ which diffuse according to different SDEs
\begin{gather*}
    dx = \mu^x(x,t)dt+\sigma^x(t)dw  \\
    dy = \mu^y(y,t)dt+\sigma^y(t)dw  
\end{gather*}
We may cast this system of two SDEs, as a single SDE
\begin{gather*}
    dz = \mu^z(z,t)dt+ \Sigma_z(t)dw 
\end{gather*}
where $z = (x,y)$, $\mu^z(z,t) = (\mu^x(x,t), \mu^y(x,t))$ and 
\begin{gather*}
    \Sigma_z(t) =  
    \begin{cases} 
        \sigma^x(t), \text{ if } i=j, \ i \leq n_x \\ 
        \sigma^y(t), \text{ if } i=j, \ n_x < i \leq n_y  \\
        0, \text{ otherwise}
    \end{cases}
    .            
\end{gather*}
If we train a score-based diffusion model for $z_t = (x_t, y_t)$, then by Theorem \ref{thm:multi-dim}
\begin{align*}
    KL(p(z_0) | p_\theta(z_0)) \leq C_1 + \frac{1}{2} \mathbb{E}_{\subalign{&t \sim U(0,T)\\ &z_t \sim p(z_t)}} 
    [
        v^T \Sigma_z(t)^2 v
    ],
\end{align*}
where $C_1 := KL(p(x_T) | \pi(x_T)) $ does not depend on $\theta$.
Because $\Lambda_{MLE}$ (from Theorem \ref{thm:weightning}) is equal to $\Sigma_z(t)^2$, we may rewrite the above as 
\begin{align*}
    KL(p(z_0) | p_\theta(z_0)) \leq C_1 + \frac{1}{2} \mathbb{E}_{\subalign{&t \sim U(0,T)\\ &z_t \sim p(z_t)}} 
    [
        v^T \Lambda_{MLE}(t)^2 v
    ],
\end{align*}
and since by denoising score matching \cite{vincent2011connection} 
\begin{align*}
    \mathbb{E}_{\subalign{&t \sim U(0,T)\\ &z_t \sim p(z_t)}} 
[
    v^T \Lambda_{MLE}(t) v
] &= 
\\ &\mathbb{E}_{\subalign{&t \sim U(0,T)\\ &z_0 \sim p_0(z_0) \\ &z_t \sim p(z_t | z_0)}} 
[
    v^T \Lambda_{MLE}(t) v
] + C_2
\end{align*}
where $C_2$ is another term constant in $\theta$.
We conclude that 
\begin{align*}
    KL(p(z_0) | p_\theta(z_0)) \leq \frac{1}{2} \mathbb{E}_{\subalign{&t \sim U(0,T)\\ &z_0 \sim p_0(z_0) \\ &z_t \sim p(z_t | z_0)}} 
    [
        v^T \Lambda_{MLE}(t) v
    ] + C_3
\end{align*}
where $C_3 := C_1 + C_2$.
Now recall that the term on the RHS is exactly the training objective of a multi-sde score-based diffusion model with likelihood weighting
\begin{gather*}
    \begin{aligned}
        \mathcal{L}(\theta) := \frac{1}{2} \mathbb{E}_{\subalign{&t \sim U(0,T)\\ &z_0 \sim p_0(z_0) \\ &z_t \sim p(z_t | z_0)}} 
        [
            v^T \Lambda_{MLE}(t) v
        ].
    \end{aligned}
\end{gather*}
Therefore
\begin{align*}
    KL(p(z_0) | p_\theta(z_0)) \leq \mathcal{L}(\theta)  + C_3.
\end{align*}
Finally, since $KL(p(z_0) | p_\theta(z_0))  = \mathbb{E}_{(x,y) \sim p(x,y)}[\ln p(x,y)]  - \mathbb{E}_{(x,y) \sim p(x,y)}[\ln p_\theta(x,y)]$, we have 
\begin{gather*}
    -\mathbb{E}_{(x,y) \sim p(x,y)}[\ln p_\theta(x,y)] \leq \mathcal{L}(\theta) + C
\end{gather*}
where $C := C_3 - \mathbb{E}_{(x,y) \sim p(x,y)}[\ln p(x,y)] $ is independent of $\theta$. Thus the Theorem \ref{thm:weightning} is established.

\subsection{Mean square approximation error}
\label{appendix:mse}

\begin{assumption}
    \label{assum: c2}
    $p(x,y) \in C^2(\mathcal{X})$\
\end{assumption}
\begin{assumption}
    \label{assum: lower_bound}
    $p(x,y) > 0$ for all $x,y$.
\end{assumption}
\begin{assumption}
    \label{assum: compact_2}
    The data space $\mathcal{X}$ is compact.
\end{assumption}

\begin{lemma}
    \label{lemma: blurring}
    Under assumptions \ref{assum: c2} and \ref{assum: compact_2} we have
    \begin{gather*}
        p_{Y_t | X_t}(y_t | x_t)  = (p_{Y | X_t}(\cdot | x_t) \ast \varphi_\sigma)(y_t) \\
        \partial_{x_t} p_{Y_t | X_t}(y_t | x_t) = (\partial_{x_t} p_{Y| X_t}(\cdot | x_t) \ast \varphi_\sigma)(y_t)
    \end{gather*}
\end{lemma}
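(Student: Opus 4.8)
The plan is to exploit the additive Gaussian structure of the forward diffusion: in the multi-speed setup the diffused condition $Y_t$ is obtained from the clean condition $Y$ by adding noise that is independent of the diffused target $X_t$, which turns the conditional law of $Y_t$ given $X_t$ into a Gaussian convolution of the conditional law of $Y$ given $X_t$. First I would recall the structure of the forward process. Since $X$ and $Y$ diffuse according to (driftless) variance-exploding SDEs with conditionally independent perturbation kernels, we may write $Y_t = Y + \xi$, where $\xi \sim \mathcal{N}(0,\sigma^2 I)$ has density $\varphi_\sigma$ (with $\sigma = \sigma^y(t)$ the diffusion scale of $y$ at time $t$) and $\xi$ is independent of $(X,Y)$ and of the noise driving $X_t$. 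Because $X_t$ is a measurable function of $X$ and its own noise, $\xi$ is independent of the whole pair $(X_t, Y)$.

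Second, I would condition on $X_t = x_t$. As $\xi \perp (X_t, Y)$, conditionally on $X_t = x_t$ the variable $\xi$ is still $\mathcal{N}(0,\sigma^2 I)$ and independent of $Y$. Hence $Y_t = Y + \xi$ is, conditionally on $X_t = x_t$, a sum of two conditionally independent random variables, so its density is the convolution of the two densities:
$$p_{Y_t \mid X_t}(y_t \mid x_t) = \int p_{Y \mid X_t}(y' \mid x_t)\,\varphi_\sigma(y_t - y')\,dy' = (p_{Y \mid X_t}(\cdot \mid x_t) \ast \varphi_\sigma)(y_t),$$
which is the first identity. This step is an immediate consequence of the noise structure and requires no regularity beyond existence of the conditional density.

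Third, for the second identity I would differentiate the convolution representation with respect to $x_t$ and move the derivative inside the integral:
$$\partial_{x_t} p_{Y_t \mid X_t}(y_t \mid x_t) = \int \partial_{x_t} p_{Y \mid X_t}(y' \mid x_t)\,\varphi_\sigma(y_t - y')\,dy' = (\partial_{x_t} p_{Y \mid X_t}(\cdot \mid x_t) \ast \varphi_\sigma)(y_t).$$
The main work, and the place where Assumptions \ref{assum: c2} and \ref{assum: compact_2} enter, is justifying this interchange by a Leibniz / dominated-convergence argument: the $C^2$ regularity of $p(x,y)$ guarantees that $\partial_{x_t} p_{Y \mid X_t}(y' \mid x_t)$ exists and is continuous, while compactness of $\mathcal{X}$ provides a uniform bound on this derivative that is integrable against the Gaussian weight $\varphi_\sigma$, furnishing the required dominating function. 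I expect this justification of differentiation under the integral sign to be the only genuine obstacle; the convolution identity itself follows directly from the conditional independence established in the second step.
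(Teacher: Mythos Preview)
Your proposal is correct and follows essentially the same route as the paper: the paper also marginalizes over $Y$, uses the conditional independence $p_{Y_t\mid Y,X_t}=p_{Y_t\mid Y}$ together with the Gaussian perturbation kernel to obtain the convolution, and then invokes the regularity assumptions to interchange $\partial_{x_t}$ with the integral. Your additive-noise formulation $Y_t=Y+\xi$ with $\xi\perp(X_t,Y)$ is just a slightly more explicit justification of the same conditional-independence step the paper uses.
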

\begin{proof}
    For this proof, we drop our convention of denoting the probability distribution of a random variable via the name of its density’s argument.
    \begin{align*}
        p_{Y_t | X_t}(y_t | x_t)  &=
        \\ &= \int p_{Y, Y_t| X_t}(y, y_t | x_t) dy 
        \\ &= \int p_{Y | X_t}(y | x_t) p_{Y_t |Y, X_t}(y_t | y, x_t) dt
        \\ &= \int p_{Y | X_t}(y | x_t) p_{Y_t |Y}(y_t | y) dy
    \end{align*}
    Since $Y_t |Y$ has normal distribution with mean $y$ and variance $\sigma^y(t)^2$:
    \begin{align*}
        &= \int p_{Y | X_t}(y | x_t) \varphi_\sigma(y_t - y)  dy 
        \\ &= (p_{Y | X_t}(\cdot | x_t) \ast \varphi_\sigma)(y_t)
    \end{align*}
    where $\varphi_\sigma$ is a Gaussian kernel with variance $\sigma^y(t)^2$.
    Moreover, under the assumptions of the lemma we can exchange the differentiaion and integration. Therefore
    \begin{align*}
        \partial_{x_t} p_{Y_t | X_t}(y_t | x_t) &= \partial_{x_t} \int p_{Y | X_t}(y | x_t) \varphi_\sigma(y_t - y) dy   
        \\ &=  \int \partial_{x_t} p_{Y | X_t}(y | x_t) \varphi_\sigma(y_t - y) dy  
        \\ &= (\partial_{x_t} p_{Y | X_t}(\cdot | x_t) \ast \varphi_\sigma)(y_t)
    \end{align*}
\end{proof}

\begin{lemma}
    \label{lemma: sup_norm}
    Let $f$ be a $C^1$-function on a compact domain $\mathcal{X}$ and let $\varphi_\sigma$ be a Gaussian kernel with variance $\sigma^2$. Then there exists a function $E: \mathbb{R} \xrightarrow{} \mathbb{R}$, which is monotonically decreasing to zero, such that
    \begin{gather*}
        \norm{(f \ast \varphi_\sigma) - f}_{\infty} \leq E(1/\sigma).
    \end{gather*}
\end{lemma}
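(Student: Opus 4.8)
The plan is to reduce the statement to the classical fact that Gaussian mollification converges uniformly, and then to manufacture the required monotone bounding function $E$ by passing to a tail supremum. The only genuinely delicate point is a domain mismatch, which I address at the end.

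First I would use that $\varphi_\sigma$ has unit mass to write, for every point $x$,
\[
(f \ast \varphi_\sigma)(x) - f(x) = \int \varphi_\sigma(z)\,[f(x-z) - f(x)]\,dz .
\]
Taking absolute values and splitting the integral over the ball $\{|z| \le \delta\}$ and its complement, I would bound the near part using the modulus of continuity of $f$, i.e. $\sup_{|z|\le\delta}|f(x-z)-f(x)|$, together with $\int \varphi_\sigma \le 1$; and the far part by $2\norm{f}_\infty$ times the Gaussian tail mass $\int_{|z|>\delta}\varphi_\sigma(z)\,dz$. Since $f \in C^1$ on a compact set it is Lipschitz, so the near term is controlled by (a constant times) $\delta$, while the far term decays by concentration of the Gaussian once $\delta/\sigma$ is large. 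Choosing, say, $\delta = \sqrt{\sigma}$ makes $\delta \to 0$ while $\delta/\sigma \to \infty$, so both terms vanish as $\sigma \to 0$, and the resulting bound is uniform in $x$. This establishes $\norm{f\ast\varphi_\sigma - f}_\infty \to 0$ as $\sigma \to 0$. (One could instead optimise over $\delta$ to obtain an explicit rate, but this is not needed.)

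To extract $E$, I would set $g(\sigma) := \norm{f\ast\varphi_\sigma - f}_\infty$, which is finite (indeed $g(\sigma) \le 2\norm{f}_\infty$, since $\norm{f\ast\varphi_\sigma}_\infty \le \norm{f}_\infty\norm{\varphi_\sigma}_1 = \norm{f}_\infty$) and tends to $0$ as $\sigma\to 0$ by the previous step. I would then define the tail envelope $E(r) := \sup_{s \ge r} g(1/s)$. By construction $E$ is non-increasing in $r$, is dominated by $2\norm{f}_\infty$ hence finite, and satisfies $E(r)\to 0$ as $r\to\infty$ because $g(1/s)\to 0$ as $s\to\infty$; moreover $g(\sigma) = g(1/r)\le E(r)$ at $r=1/\sigma$, which is exactly $\norm{f\ast\varphi_\sigma - f}_\infty \le E(1/\sigma)$. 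If strict monotonicity is wanted one can add a vanishing strictly decreasing term such as $e^{-r}$.

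The main obstacle is not any of the estimates above but the mismatch of domains: $f$ is given only on the compact set $\mathcal{X}$, whereas $f\ast\varphi_\sigma$ samples $f$ at points outside $\mathcal{X}$, so $f(x-z)$ need not be defined. I would resolve this by extending $f$ to a compactly supported $C^1$ function on all of $\mathbb{R}^{n}$ (possible by a standard extension theorem, using compactness of $\mathcal{X}$ and $f\in C^1$); the extension is bounded and uniformly continuous, which is precisely what the near/far split requires, and it agrees with $f$ on $\mathcal{X}$ so the sup-norm conclusion transfers back. With this extension in place, the remaining steps are routine.
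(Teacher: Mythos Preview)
Your proof is correct and follows the same core analytic strategy as the paper: a near/far split of the convolution integral, controlling the near part by Lipschitz continuity of $f$ and the far part by Gaussian tail mass against the sup-norm bound on $f$. The main difference lies in how the monotone bounding function $E$ is produced. The paper keeps the splitting radius as a free parameter $\epsilon$, obtains an explicit two-term bound $E_\epsilon(1/\sigma) = \epsilon + 4M\exp\bigl(-\epsilon^2/(2L^2\sigma^2)\bigr)$ via a Chernoff tail estimate, and then sets $E(1/\sigma) := \min_{\epsilon\in[0,1]} E_\epsilon(1/\sigma)$, verifying by hand that this minimum is strictly decreasing to zero. Your tail-supremum construction $E(r) := \sup_{s\ge r} g(1/s)$ is cleaner and more general: it works for any $g$ that merely tends to zero, without needing an explicit rate or a secondary optimisation, at the cost of not giving a closed-form $E$. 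Your handling of the domain mismatch (extending $f$ to a compactly supported $C^1$ function on $\mathbb{R}^n$) is also more careful than the paper, which silently treats $f$ as globally defined.
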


\begin{proof}
    \begin{align*}
        & \phantom{=}|(f \ast \varphi_\sigma)(y) -f(y)| 
        \\&= \bigg| \int f(z)  \varphi_\sigma(z-y) dz - \int f(y) \varphi_\sigma(z-y) dz \bigg|
        \\&\leq  \int |f(z) -f(y)| \varphi_\sigma(z-y) dz
    \end{align*}
    Since $f$ is  a $C^1$ function on a compact domain, it is Lipschitz and bounded (in absolute value) by some constant $M$. 
    Fix $\epsilon > 0$, and let $L$ denote the Lipschitz constant of $f$.
    We have that $|f(z) -f(y)| < \epsilon$ whenever $\norm{z-y} < \epsilon / L$.
    Let $B_y( \epsilon / L ) := \{z \in \mathcal{X} : \norm{z-y} < \epsilon / L \}$ be a ball of radius $\epsilon / L$ around $y$.
    Then
    \begin{align*}
        &\phantom{=}\int |f(z) -f(y)| \varphi_\sigma(z-y) dz 
        \\ &\! \begin{aligned}
            = \int_{B_y( \epsilon / L )}& |f(z) -f(y)| \varphi_\sigma(z-y) dz 
            \\ &+ \int_{\mathcal{X} \setminus B_y( \epsilon / L )} |f(z) -f(y)| \varphi_\sigma(z-y) dz
        \end{aligned}
        \\ &\leq \epsilon +  \int_{\mathcal{X} \setminus B_y( \epsilon / L )}  2M \varphi_\sigma(z-y) dz
       \\ &= \epsilon +  2M P \left( |Z_\sigma| > \frac{\epsilon}{L} \right)
    \end{align*}
    where $Z_\sigma$ is a normally-distributed random variable with mean zero and variance $\sigma^2$.
    By the Chernoff bound, we have
    \begin{align*}
        \leq  \epsilon +  4M \exp \left( -\frac{\epsilon^2}{2L^2 \sigma^2}\right).
    \end{align*}
    
    \noindent Define $E_\epsilon(1/\sigma) :=   \epsilon +  4M \exp \left( -\frac{\epsilon^2}{2L^2 \sigma^2}\right)$. Observe that $E_\epsilon: \mathbb{R}_+ \xrightarrow{} \mathbb{R}$ is monotonically decreasing to $\epsilon$.
    Moreover 
    $$ \norm{(f \ast \varphi_\sigma) - f}_{\infty} \leq E_{\epsilon}(1/\sigma). $$
    Now let $A := [0,1]$ and define 
    $$E(1/\sigma) := \min_{\epsilon \in A} E_\epsilon(1/\sigma).$$
    Notice that the above minimum is achieved, since $A$ is compact and for a fixed $\sigma$, the function $\epsilon \mapsto E_{\epsilon}(1/\sigma)$ is continuous.

    We will prove that $E$ is a monotonically decreasing to zero and upper-bounds $\norm{(f \ast \varphi_\sigma) - f}_{\infty}$.
    Firstly, it is clear that $E(x) \to 0$ as $x \to \infty$, since for all $\epsilon \in A$ we have $\lim_{x \to \infty} E(x) \leq \lim_{x \to \infty} E_{\epsilon}(x)=\epsilon$ . 
    Secondly, suppose $a < b$, and let $\epsilon_a$ be such that $E(a) = E_{\epsilon_a}(a)$. Then
    $$E(b) = \inf_{\epsilon \in A} E_{\epsilon}(b) \leq E_{\epsilon_a}(b) < E_{\epsilon_a}(a) = E(a).$$
    Therefore $E$ is monotonically decreasing.
    Finally since for all $\epsilon > 0$
    $$ \norm{(f \ast \varphi_\sigma) - f}_{\infty} \leq E_{\epsilon}(1/\sigma). $$
    Taking minimum over $\epsilon \in A$ on both sides we obtain 
    $$ \norm{(f \ast \varphi_\sigma) - f}_{\infty} \leq E(1/\sigma). $$
\end{proof}

\begin{lemma}
    \label{lemma: Lpz}
    Let $f$ be a $C^1$ function on a compact domain and let $Z$ be a random variable with mean $\mu$ and variance $\sigma^2$.
    Then
    \begin{gather*}
        \mathbb{E}_Z[(f(\mu) - f(Z))^2] \leq L^2 \sigma^2
    \end{gather*}
    where $L$ denotes the Lipschitz constant of $f$.
\end{lemma}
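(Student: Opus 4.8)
The plan is to exploit the Lipschitz continuity of $f$, which is the only structural fact we need and which is already available: since $f$ is $C^1$ on a compact domain, its derivative is continuous and hence bounded, so $f$ is globally Lipschitz on that domain with some constant $L$ (exactly the observation invoked in the proof of Lemma \ref{lemma: sup_norm}). I would take this $L$ to be the Lipschitz constant named in the statement, and I would assume, as is implicit, that $Z$ takes values in the compact domain almost surely so that the Lipschitz bound applies to every realization.

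First I would record the pointwise inequality. For any realization $z$ of $Z$, the Lipschitz property gives $|f(\mu) - f(z)| \leq L\,|\mu - z|$, and squaring both sides yields
\begin{gather*}
    (f(\mu) - f(z))^2 \leq L^2 (\mu - z)^2.
\end{gather*}
This holds for all $z$ in the domain, so it holds $Z$-almost surely.

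Next I would take expectations over $Z$ on both sides, using monotonicity of the expectation, to obtain
\begin{gather*}
    \mathbb{E}_Z[(f(\mu) - f(Z))^2] \leq L^2 \, \mathbb{E}_Z[(\mu - Z)^2].
\end{gather*}
Finally I would identify the right-hand factor with the variance: since $\mu = \mathbb{E}[Z]$, we have $\mathbb{E}_Z[(\mu - Z)^2] = \mathrm{Var}(Z) = \sigma^2$, and substituting this in gives the claimed bound $\mathbb{E}_Z[(f(\mu) - f(Z))^2] \leq L^2 \sigma^2$.

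There is no genuine obstacle here; the argument is a one-line consequence of the Lipschitz estimate together with the definition of variance. The only points warranting a word of care are the justification that $C^1$ on a compact set implies the global Lipschitz bound (so that $L$ is well defined and finite) and the tacit assumption that $Z$ is supported in the domain, both of which are consistent with the standing compactness assumption (Assumption \ref{assum: compact_2}) used throughout this part of the appendix.
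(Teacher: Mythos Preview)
Your proposal is correct and matches the paper's proof essentially verbatim: the paper also notes that $C^1$ on a compact domain implies Lipschitz with some constant $L$, then passes directly from $\mathbb{E}_Z[(f(\mu)-f(Z))^2]\le L^2\,\mathbb{E}_Z[(\mu-Z)^2]$ to $L^2\sigma^2$. Your additional remarks about $Z$ being supported in the domain are reasonable caveats but do not alter the argument.
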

\begin{proof}
Since  $f$ is a $C^1$ function on a compact domain it is Lipschitz with some Lipschitz constant $L$. Therefore
\begin{align*}
    \mathbb{E}_Z[(f(\mu) - f(Z))^2] 
    \leq L^2\mathbb{E}_Z[(\mu - Z)^2]
    \leq L^2 \sigma^2
\end{align*}    
\end{proof}

\begin{customthm}{3}
    Fix $t$, $x_t$ and $y$. Then under Assumptions \ref{assum: c2}, \ref{assum: lower_bound} and \ref{assum: compact_2}, there exists a function $E: \mathbb{R} \xrightarrow{} \mathbb{R}$ which is monotonically decreasing to zero, such that
    \begin{gather*}
        \mathbb{E}_{y_t \sim p(y_t|y)}[
            \norm{ \nabla_{x_t} \ln p(x_t|y_t) - \nabla_{x_t} \ln p(x_t|y)}_2^2
            ] \\
            \leq E(1/\sigma^y(t)).
    \end{gather*}
\end{customthm}
\begin{proof}
    For this proof, we drop our convention of denoting the probability distribution of a random variable via the name of its density’s argument.
    \begin{align*}
        \norm{ \nabla_{x_t} \ln p_{X_t | Y_t}(x_t|y_t) - \nabla_{x_t} \ln p_{X_t | Y}(x_t|y)}_2^2
    \\= \sum_{i=1}^{n_x} ( \partial^i_{x_t} \ln p_{X_t | Y_t}(x_t|y_t) - \partial^i_{x_t} \ln p_{X_t | Y}(x_t|y) )^2
    \end{align*}
    Therefore it is sufficient to prove the theorem in each dimension separately. Hence, without loss of generality, we may assume that $x_t \in \mathbb{R}$ and show
    \begin{gather*}
        \mathbb{E}_{y_t \sim p(y_t|y)}[
            ( \partial_{x_t} \ln p_{X_t | Y_t}(x_t|y_t) - \partial_{x_t} \ln p_{X_t | Y}(x_t|y) )^2
        ]  
        \\ \leq E(1/\sigma^y(t)).
    \end{gather*}
    By Bayes's rule we have
    \begin{align*}
        &\partial_{x_t} \ln p_{X_t | Y_t}(x_t|y_t)  =  \partial_{x_t} \ln p_{Y_t | X_t}(y_t | x_t) + \partial_{x_t} \ln p_{X_t}(x_t)
        \\ &\partial_{x_t} \ln  p_{X_t | Y}(x_t|y)  = \partial_{x_t} \ln p_{Y | X_t}(y | x_t) + \partial_{x_t} \ln p_{X_t}(x_t).
    \end{align*}
    Therefore,
    \begin{align*}
        &(  \partial_{x_t} \ln p_{X_t | Y_t}(x_t|y_t)  - \partial_{x_t} \ln p_{X_t | Y}(x_t|y) )^2
        \\ &= (\partial_{x_t} \ln p_{Y_t | X_t}(y_t | x_t)- \partial_{x_t} \ln p_{Y | X_t}(y | x_t) )^2.
    \end{align*}
    To unclutter the notation, let $p(y | x) := p_{Y | X_t}(y | x)$ and $p_\sigma(y | x) :=  p_{Y_t | X_t}(y | x)$. Applying this notation:
    \begin{align*}
        \mathbb{E}_{y_t \sim p(y_t|y)}[
            (\partial_{x_t} \ln p_{Y_t | X_t}(y_t | x_t)- \partial_{x_t} \ln p_{Y | X_t}(y | x_t) )^2] \\
        = 
        \mathbb{E}_{y_t \sim p(y_t|y)}[
            (\partial_{x_t} \ln p_\sigma(y_t | x_t) - \partial_{x_t}  \ln p(y | x_t) )^2 ]
    \end{align*}
    Adding and subtracting $\partial_{x_t} \ln p(y_t | x_t)$ and using the triangle inequality:
    \begin{align*}
        \! \begin{aligned} 
            \leq &\mathbb{E}_{y_t \sim p(y_t|y)}[
                (\partial_{x_t} \ln p_\sigma(y_t | x_t) - \partial_{x_t} \ln p(y_t | x_t) )^2] \\
            &+ \mathbb{E}_{y_t \sim p(y_t|y)}[
                (\partial_{x_t} \ln p(y_t | x_t)- \partial_{x_t} \ln p(y | x_t) )^2]
        \end{aligned}
    \end{align*}
    We may bound the expectation by the supremum norm
    \begin{align*}
        \! \begin{aligned} 
            \leq & \norm{\partial_{x_t} \ln p_\sigma( \cdot | x_t) - \partial_{x_t} \ln p( \cdot | x_t) }_{\infty}^2 \\
            &+ \mathbb{E}_{y_t \sim p(y_t|y)}[(\partial_{x_t} \ln p(y_t | x_t)- \partial_{x_t} \ln p(y | x_t) )^2]
        \end{aligned}
    \end{align*}
    We will bound each of the summands separately. Firstly, by Assumption \ref{assum: c2}  $(y_t, x_t) \to p(y_t | x_t)$ is $C^2$ and therefore $(y_t, x_t) \to \partial_{x_t}p(y_t | x_t)$ is $C^1$. Moreover, since $\mathcal{X}$ is compact,  $y_t \to \partial_{x_t}p(y_t | x_t)$  is Lipschitz for some Lipschitz constant $L$. 
    Therefore, by Lemma \ref{lemma: Lpz},
    \begin{align*}
        \mathbb{E}_{y_t \sim p(y_t|y)}[ (\partial_{x_t} \ln p(y_t | x_t)- \partial_{x_t} \ln p(y | x_t) )^2] \leq  L^2 \sigma^y(t)^2.
    \end{align*} 
    To finish the proof, we need to bound $$ \norm{\partial_{x_t} \ln p_\sigma( \cdot | x_t) - \partial_{x_t} \ln p( \cdot | x_t) }_{\infty}^2 $$
    First, we apply the chain rule
    \begin{align*}
        &\phantom{=}\norm{\partial_{x_t} \ln p_\sigma( \cdot | x_t) - \partial_{x_t} \ln p( \cdot | x_t) }_{\infty}^2 
        \\ &=  \norm{ 
            \frac{\partial_{x_t} p_{\sigma}( \cdot | x_t)}{ p_{\sigma}( \cdot | x_t)}  
            - \frac{\partial_{x_t} p( \cdot | x_t)}{ p( \cdot | x_t)} 
        }_{\infty}^2 
    \end{align*}
    Adding and subtracting $\frac{\partial_{x_t} p_{\sigma}( \cdot | x_t)}{ p( \cdot | x_t)} $:
    \begin{align*}
        \ \! &\begin{aligned}
             \leq &\norm{                  
            \frac{\partial_{x_t} p_{\sigma}( \cdot | x_t)}{ p_{\sigma}( \cdot | x_t)}  
            - \frac{\partial_{x_t} p_{\sigma}( \cdot | x_t)}{ p( \cdot | x_t)} 
        }_{\infty}^2 
        \\ &+  \norm{ 
            \frac{\partial_{x_t} p_{\sigma}( \cdot | x_t)}{ p( \cdot | x_t)}  
            - \frac{\partial_{x_t} p( \cdot | x_t)}{ p( \cdot | x_t)} 
        }_{\infty}^2 
        \end{aligned}
        \\ \! &\begin{aligned}
             = &\norm{ 
            \frac{\partial_{x_t} p_{\sigma}( \cdot | x_t)[ p( \cdot | x_t) - p_{\sigma}( \cdot | x_t)]}
            { p_{\sigma}( \cdot | x_t) p( \cdot | x_t)}  
        }_{\infty}^2 
        \\ &+  \norm{ 
            \frac{\partial_{x_t} p_{\sigma}( \cdot | x_t) - \partial_{x_t} p( \cdot | x_t)}
            { p( \cdot | x_t)}  
        }_{\infty}^2 
        \end{aligned}
    \end{align*}
    By assumption \ref{assum: c2} and \ref{assum: compact_2} we have that $\partial_{x_t} p_{\sigma}( \cdot | x_t)$, $p_{\sigma}( \cdot | x_t)$ and  $p( \cdot | x_t)$ are continuous functions on a compact domain. Therefore, $\partial_{x_t} p_{\sigma}( \cdot | x_t)$ is bounded from above by some constant $M$. Moreover, by adding assumption \ref{assum: lower_bound} we obtain that $p_{\sigma}( \cdot | x_t)$ and  $p( \cdot | x_t)$ are bounded from below by some $\epsilon > 0$. Therefore 
    \begin{align*}
        \! &\begin{aligned}
            \leq &\norm{ 
           \frac{\partial_{x_t} p_{\sigma}( \cdot | x_t)[ p( \cdot | x_t) - p_{\sigma}( \cdot | x_t)]}
           { p_{\sigma}( \cdot | x_t) p( \cdot | x_t)}  
       }_{\infty}^2 
       \\ &+  \norm{ 
           \frac{\partial_{x_t} p_{\sigma}( \cdot | x_t) - \partial_{x_t} p( \cdot | x_t)}
           { p( \cdot | x_t)}  
       }_{\infty}^2 
       \end{aligned}
       \\ \! &\begin{aligned}
        \leq \frac{M}{\epsilon^2} &\norm{ p( \cdot | x_t) - p_{\sigma}( \cdot | x_t)
   }_{\infty}^2 
   \\ &+ \frac{1}{\epsilon}  \norm{ \partial_{x_t} p_{\sigma}( \cdot | x_t) - \partial_{x_t} p( \cdot | x_t)
   }_{\infty}^2 
   \end{aligned}
    \end{align*}
    Now by Lemma \ref{lemma: sup_norm} and \ref{lemma: blurring}
    \begin{align*}
        \leq \frac{M}{\epsilon^2} E_1(1/\sigma^y(t)^2) + \frac{1}{\epsilon}  E_2(1/\sigma^y(t)^2)
    \end{align*}
    where $E_1$ and $E_2$ are monotonically decreasing to zero.
    The theorem follows with $E(1/\sigma^y(t)^2) := \frac{M}{\epsilon^2} E_1(1/\sigma^y(t)^2) + \frac{1}{\epsilon}  E_2(1/\sigma^y(t)^2) + L^2 \sigma^y(t)^2$, which monotonically decreases to zero as $\sigma^y(t)^2$ decreases to zero.
\end{proof}

\section{Architectures and hyperparameters}
\label{appendix:hyperparams}
We used almost the same neural network architecture across all tasks and all estimators, so that we can compare the estimators fairly. The only difference between the score model for the diffusive estimators and the score model for the CDE estimator is that the former contains $6$ instead $3$ filters in the final convolution to account for the joint score estimation. This difference in the final convolution leads to negligible difference in the number of parameters, which is highly unlikely to have impacted the final performance. 

We used the basic version of the DDPM architecture with the following hyperparameters: channel dimension $96$, depth multipliers $[1, 1, 2, 2, 3, 3]$, $2$ ResNet Blocks per scale and attention in the final $3$ scales. The total parameter count is 43.5M. Song et al. \cite{song2021sde} report improved performance with the NCSN++ architecture over the baseline DDPM when training with the VE SDE. This claim is also supported by the work of Saharia et al. \cite{saharia2021sr3}. Therefore, adopting this architecture is likely to improve the performance of all estimators and lead to even more competitive performance over state-of-the-art methods. For all estimators, we concatenate the condition image $y$ or $y(t)$ with the diffused target $x(t)$ and pass the concatenated image as input to the score model for score calculation. In the super-resolution experiment, we first interpolate the condition to the same resolution as the target using nearest neighbours interpolation and then concatenate it with the target image. 

We used exponential moving average (EMA) with rate 0.999 and the same optimizer settings as in \cite{song2021sde}. Moreover, we used a batch size of $50$ for the super-resolution and edge to image translation experiments and a batch size of $100$ for the inpainting experiments.

\begin{comment}

\begin{table*}
    \begin{center}
    \caption{Summary of estimators of conditional score.}
    \begin{tabular}{cc}
    \toprule
    Estimator & Training objective \\
    \midrule
    CDE & $\frac{1}{2} \mathbb{E}_{t \sim U(0,T)} \mathbb{E}_{x_0, y \sim p(x_0, y)} \mathbb{E}_{x_t \sim p(x_t | x_0)}
                [\lambda(t) \norm{\nabla_{x_t} \ln{p(x_t | x_0)} - s_\theta(x_t, y, t)}_2^2]
        $ \\[5pt]
    
    CDiffE & $ \frac{1}{2} \mathbb{E}_{t \sim U(0,T)}\mathbb{E}_{z_0 \sim p_0(z_0)}  \mathbb{E}_{z_t \sim p(z_t | z_0)} [\lambda(t) \norm{\nabla_{z_t} \ln{p(z_t |z_0)} - s_\theta(z_t,t)}_2^2] $  \\[5pt]
    
    CMDE & $  \frac{1}{2} \mathbb{E}_{t \sim U(0,T)}\mathbb{E}_{z_0 \sim p_0(z_0)}  \mathbb{E}_{z_t \sim p(z_t | z_0)}
    [
        v^T \Lambda(t) v 
    ]$ where  $v=\nabla_{z_t} \ln{p(z_t |z_0)} - s_\theta(z_t,t)$ \\[5pt]
    \bottomrule
    \end{tabular}
    \end{center}
\end{table*}
\end{comment}

\section{Extended visual results}

We provide additional samples in Figures \ref{fig:additional_sr}, \ref{fig:additional_inpainting} and \ref{fig:additional_shoe}.

\begin{figure*}
    \begin{center}
    \begingroup
    \setlength{\tabcolsep}{0pt}

    \begin{tabular}{ccccccc}
        Original image $x$ & Observation $ y$ & HCFlow & CDE & CDiffE & CMDE & VS-CMDE \\

        \includegraphics[width=.14\textwidth]{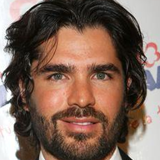} &   
        \includegraphics[width=.14\textwidth]{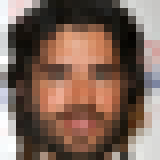} &
        \includegraphics[width=.14\textwidth]{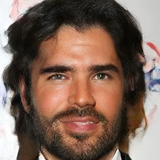} &
        \includegraphics[width=.14\textwidth]{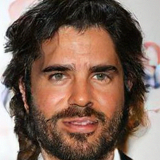} & 
        \includegraphics[width=.14\textwidth]{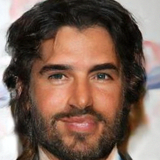} &
        \includegraphics[width=.14\textwidth]{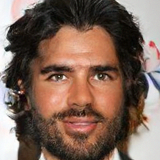} &
        \includegraphics[width=.14\textwidth]{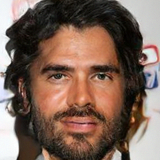}\\
        
        \includegraphics[width=.14\textwidth]{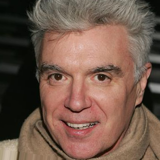} &   
        \includegraphics[width=.14\textwidth]{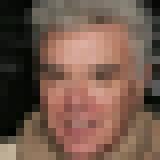} &
        \includegraphics[width=.14\textwidth]{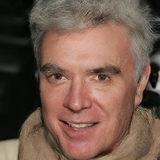} &
        \includegraphics[width=.14\textwidth]{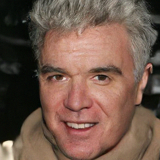} & 
        \includegraphics[width=.14\textwidth]{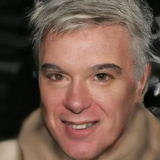} &
        \includegraphics[width=.14\textwidth]{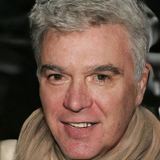} &
        \includegraphics[width=.14\textwidth]{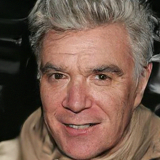}\\
        
        \includegraphics[width=.14\textwidth]{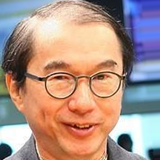} &   
        \includegraphics[width=.14\textwidth]{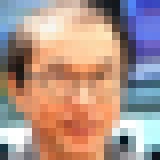} &
        \includegraphics[width=.14\textwidth]{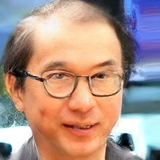} &
        \includegraphics[width=.14\textwidth]{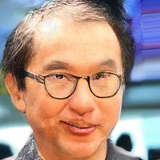} & 
        \includegraphics[width=.14\textwidth]{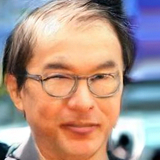} &
        \includegraphics[width=.14\textwidth]{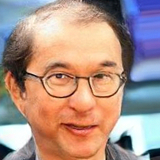} &
        \includegraphics[width=.14\textwidth]{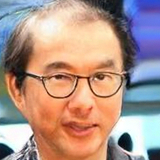}\\
        
        \includegraphics[width=.14\textwidth]{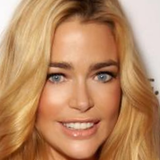} &   
        \includegraphics[width=.14\textwidth]{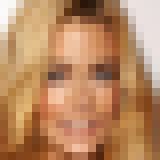} &
        \includegraphics[width=.14\textwidth]{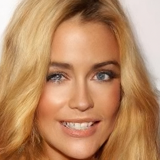} &
        \includegraphics[width=.14\textwidth]{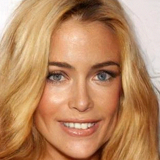} & 
        \includegraphics[width=.14\textwidth]{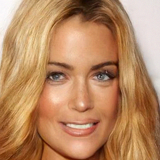} &
        \includegraphics[width=.14\textwidth]{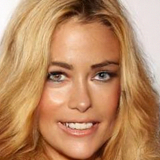} &
        \includegraphics[width=.14\textwidth]{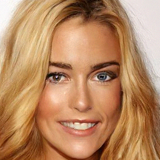}\\
        
        \includegraphics[width=.14\textwidth]{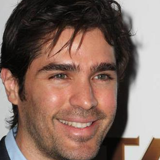} &   
        \includegraphics[width=.14\textwidth]{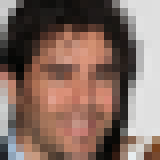} &
        \includegraphics[width=.14\textwidth]{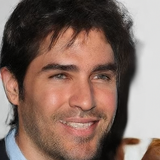} &
        \includegraphics[width=.14\textwidth]{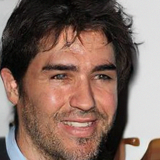} & 
        \includegraphics[width=.14\textwidth]{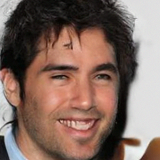} &
        \includegraphics[width=.14\textwidth]{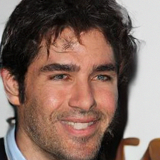} &
        \includegraphics[width=.14\textwidth]{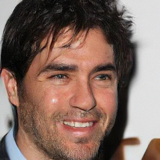}\\
        
        \includegraphics[width=.14\textwidth]{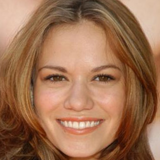} &   
        \includegraphics[width=.14\textwidth]{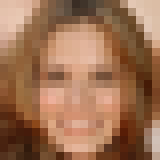} &
        \includegraphics[width=.14\textwidth]{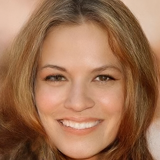} &
        \includegraphics[width=.14\textwidth]{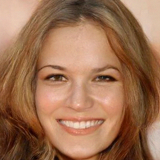} & 
        \includegraphics[width=.14\textwidth]{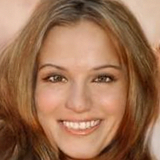} &
        \includegraphics[width=.14\textwidth]{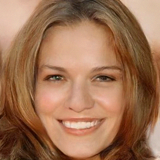} &
        \includegraphics[width=.14\textwidth]{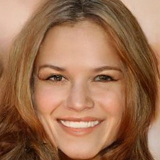}\\
        
        \includegraphics[width=.14\textwidth]{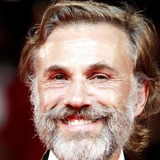} &   
        \includegraphics[width=.14\textwidth]{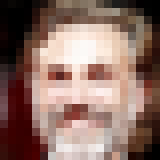} &
        \includegraphics[width=.14\textwidth]{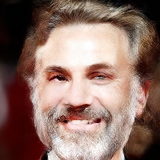} &
        \includegraphics[width=.14\textwidth]{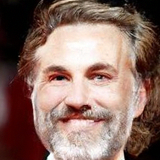} & 
        \includegraphics[width=.14\textwidth]{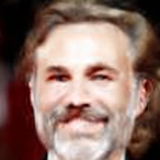} &
        \includegraphics[width=.14\textwidth]{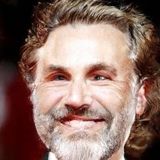} &
        \includegraphics[width=.14\textwidth]{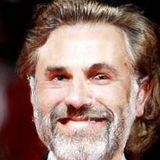}\\
        
        \includegraphics[width=.14\textwidth]{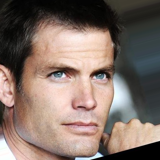} &   
        \includegraphics[width=.14\textwidth]{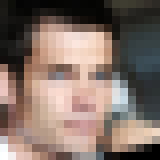} &
        \includegraphics[width=.14\textwidth]{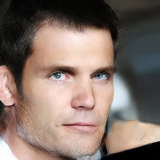} &
        \includegraphics[width=.14\textwidth]{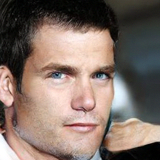} & 
        \includegraphics[width=.14\textwidth]{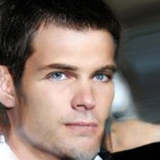} &
        \includegraphics[width=.14\textwidth]{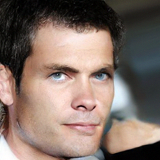} &
        \includegraphics[width=.14\textwidth]{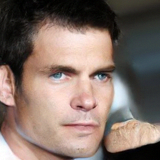}\\

    \end{tabular}
    \endgroup
    \end{center}
    \caption{Extended super-resolution results.}
    \label{fig:additional_sr}
\end{figure*}

\begin{figure*}
    \begin{center}
    \begingroup
    \setlength{\tabcolsep}{0pt}

    \begin{tabular}{ccccccc}
        Original image $x$ & Observation $ y$ & CDE & CDiffE & CMDE & VS-CMDE \\

        \includegraphics[width=.145\textwidth]{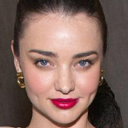} &   
        \includegraphics[width=.145\textwidth]{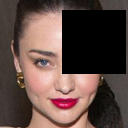} &
        \includegraphics[width=.145\textwidth]{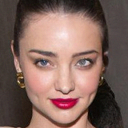} & 
        \includegraphics[width=.145\textwidth]{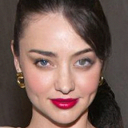} &
        \includegraphics[width=.145\textwidth]{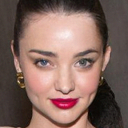} &
        \includegraphics[width=.145\textwidth]{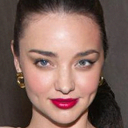}\\
        
        \includegraphics[width=.145\textwidth]{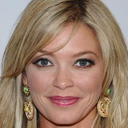} &   
        \includegraphics[width=.145\textwidth]{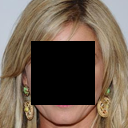} &
        \includegraphics[width=.145\textwidth]{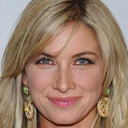} & 
        \includegraphics[width=.145\textwidth]{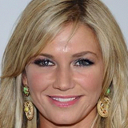} &
        \includegraphics[width=.145\textwidth]{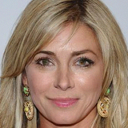} &
        \includegraphics[width=.145\textwidth]{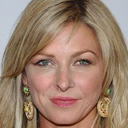}\\
        
        \includegraphics[width=.145\textwidth]{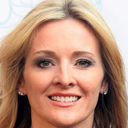} &   
        \includegraphics[width=.145\textwidth]{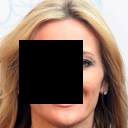} &
        \includegraphics[width=.145\textwidth]{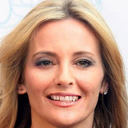} & 
        \includegraphics[width=.145\textwidth]{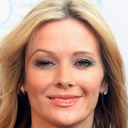} &
        \includegraphics[width=.145\textwidth]{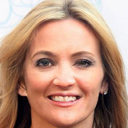} &
        \includegraphics[width=.145\textwidth]{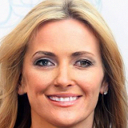}\\
        
        \includegraphics[width=.145\textwidth]{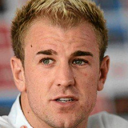} &   
        \includegraphics[width=.145\textwidth]{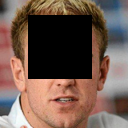} &
        \includegraphics[width=.145\textwidth]{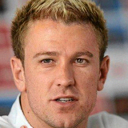} & 
        \includegraphics[width=.145\textwidth]{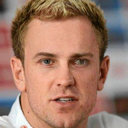} &
        \includegraphics[width=.145\textwidth]{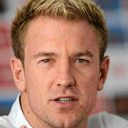} &
        \includegraphics[width=.145\textwidth]{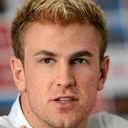}\\
        
        \includegraphics[width=.145\textwidth]{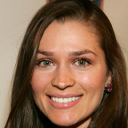} &   
        \includegraphics[width=.145\textwidth]{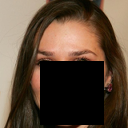} &
        \includegraphics[width=.145\textwidth]{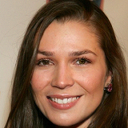} & 
        \includegraphics[width=.145\textwidth]{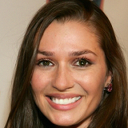} &
        \includegraphics[width=.145\textwidth]{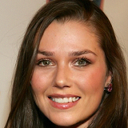} &
        \includegraphics[width=.145\textwidth]{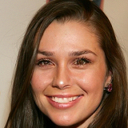}\\
        
        \includegraphics[width=.145\textwidth]{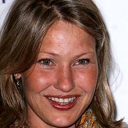} &   
        \includegraphics[width=.145\textwidth]{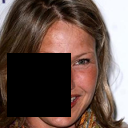} &
        \includegraphics[width=.145\textwidth]{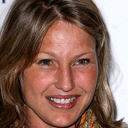} & 
        \includegraphics[width=.145\textwidth]{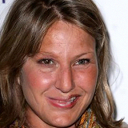} &
        \includegraphics[width=.145\textwidth]{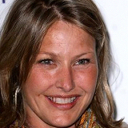} &
        \includegraphics[width=.145\textwidth]{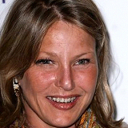}\\
        
        \includegraphics[width=.145\textwidth]{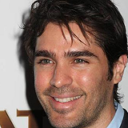} &   
        \includegraphics[width=.145\textwidth]{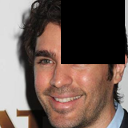} &
        \includegraphics[width=.145\textwidth]{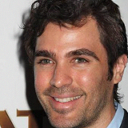} & 
        \includegraphics[width=.145\textwidth]{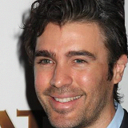} &
        \includegraphics[width=.145\textwidth]{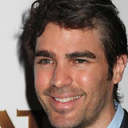} &
        \includegraphics[width=.145\textwidth]{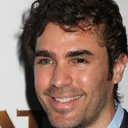}\\
        
        \includegraphics[width=.145\textwidth]{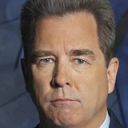} &   
        \includegraphics[width=.145\textwidth]{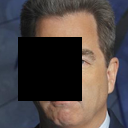} &
        \includegraphics[width=.145\textwidth]{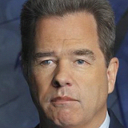} & 
        \includegraphics[width=.145\textwidth]{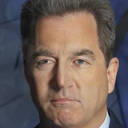} &
        \includegraphics[width=.145\textwidth]{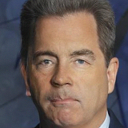} &
        \includegraphics[width=.145\textwidth]{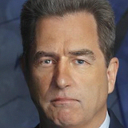}\\

    \end{tabular}
    \endgroup
    \end{center}
    \caption{Extended inpainting results.}
    \label{fig:additional_inpainting}
\end{figure*}

\begin{figure*}
    \begin{center}
    \begingroup
    \setlength{\tabcolsep}{2.5pt}
    \begin{tabular}{lcccccccc}
        \begin{tabular}{@{}l@{}}
            Original image $x$
            \\[25pt]
        \end{tabular}
         & \includegraphics[width=.08\textwidth]{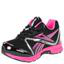} & \includegraphics[width=.08\textwidth]{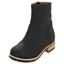} & \includegraphics[width=.08\textwidth]{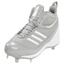} &
         \includegraphics[width=.08\textwidth]{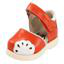} &
         \includegraphics[width=.08\textwidth]{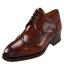} &
         \includegraphics[width=.08\textwidth]{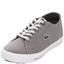} &
         \includegraphics[width=.08\textwidth]{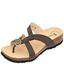} &
         \includegraphics[width=.08\textwidth]{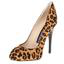} \\
        
        \begin{tabular}{@{}l@{}}
            Observation \\ $ y := Ax$
            \\[25pt]
        \end{tabular}
         & \includegraphics[width=.08\textwidth]{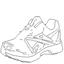} & \includegraphics[width=.08\textwidth]{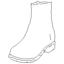} & \includegraphics[width=.08\textwidth]{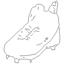} &
         \includegraphics[width=.08\textwidth]{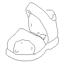} &
         \includegraphics[width=.08\textwidth]{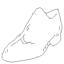} &
         \includegraphics[width=.08\textwidth]{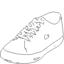} &
         \includegraphics[width=.08\textwidth]{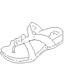} &
         \includegraphics[width=.08\textwidth]{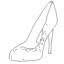} \\
        
         \begin{tabular}{@{}c@{}}
            CDE
            \\[25pt]
        \end{tabular} & 
            \includegraphics[width=.08\textwidth]{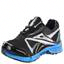} & \includegraphics[width=.08\textwidth]{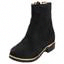} & \includegraphics[width=.08\textwidth]{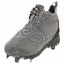} &
            \includegraphics[width=.08\textwidth]{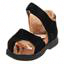} &
            \includegraphics[width=.08\textwidth]{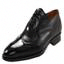} &
            \includegraphics[width=.08\textwidth]{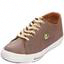} &
            \includegraphics[width=.08\textwidth]{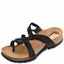} &
            \includegraphics[width=.08\textwidth]{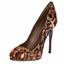} \\

        \begin{tabular}{@{}c@{}}
           CDiffE
            \\[25pt]
        \end{tabular} & 
            \includegraphics[width=.08\textwidth]{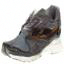} & \includegraphics[width=.08\textwidth]{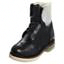} & \includegraphics[width=.08\textwidth]{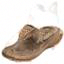} &
            \includegraphics[width=.08\textwidth]{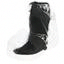} &
            \includegraphics[width=.08\textwidth]{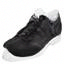} &
            \includegraphics[width=.08\textwidth]{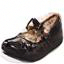} &
            \includegraphics[width=.08\textwidth]{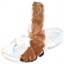} &
            \includegraphics[width=.08\textwidth]{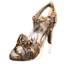} \\

        \begin{tabular}{@{}c@{}}
            CMDE
            \\[25pt]
        \end{tabular}  &  
            \includegraphics[width=.08\textwidth]{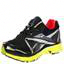} & \includegraphics[width=.08\textwidth]{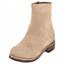} & \includegraphics[width=.08\textwidth]{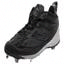} &
            \includegraphics[width=.08\textwidth]{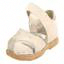} &
            \includegraphics[width=.08\textwidth]{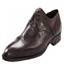} &
            \includegraphics[width=.08\textwidth]{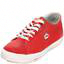} &
            \includegraphics[width=.08\textwidth]{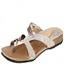} &
            \includegraphics[width=.08\textwidth]{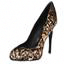} \\
            
        \begin{tabular}{@{}c@{}}
            VS-CMDE
            \\[25pt]
        \end{tabular}  &  
            \includegraphics[width=.08\textwidth]{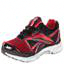} & \includegraphics[width=.08\textwidth]{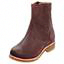} & \includegraphics[width=.08\textwidth]{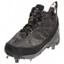} &
            \includegraphics[width=.08\textwidth]{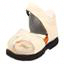} &
            \includegraphics[width=.08\textwidth]{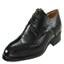} &
            \includegraphics[width=.08\textwidth]{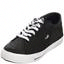} &
            \includegraphics[width=.08\textwidth]{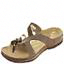} &
            \includegraphics[width=.08\textwidth]{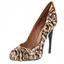} \\
    \end{tabular}
    \endgroup
    \end{center}
    \caption{Extended edge to shoe synthesis results.}
    \label{fig:additional_shoe}
\end{figure*}

\section{Potential negative impact}
The potential of negative impact of this work is the same as that of any work that advances generative modeling. Generative modeling can be used for the creation of deep-fakes which can be used for malicious purposes such as disinformation and blackmailing. However, research on generative modeling can indirectly or directly contribute to the robustification of deep-fake detection algorithms. Moreover, generative models have proven very useful in academic research and in industry. The potential benefits of generative modeling outweigh the potential threats. Therefore, the research community should continue to conduct research on generative modeling.

\end{document}